\theoremstyle{plain}
\theoremstyle{definition}
\theoremstyle{remark}
\newif\ifdraft
\newtheorem{thm}{Theorem}
\newtheorem{claim}{Claim}
\newtheorem{observation}{Observation}
\newcommand*{\centernot}{%
  \mathpalette\@centernot
}
\def\@centernot#1#2{%
  \mathrel{%
    \rlap{%
      \settowidth\dimen@{$\m@th#1{#2}$}%
      \kern.5\dimen@
      \settowidth\dimen@{$\m@th#1=$}%
      \kern-.5\dimen@
      $\m@th#1\not$%
    }%
    {#2}%
  }%
}
\newcommand\independent{\protect\mathpalette{\protect\independenT}{\perp}}
\def\independenT#1#2{\mathrel{\rlap{$#1#2$}\mkern2mu{#1#2}}}
\newcommand{\indep}{\Perp}
\newcommand{\eps}{\varepsilon}
\renewcommand{\epsilon}{\varepsilon}
\newcommand{\gapsuff}[1]{\Delta^{\mathrm{suff}}_{#1}}
\newcommand{\gapsuffmax}{\Delta^{\mathrm{suff}}_{\mathrm{max}}}
\newcommand{\gapsep}[1]{\Delta^{\mathrm{sep}}_{#1}}
\newcommand{\gapsepmax}{\Delta^{\mathrm{sep}}_{\mathrm{max}}}
\newcommand{\allgapmax}{\mathrm{max} (\gapsuff{}, \gapsep{})}
\newcommand{\allgapmaxbinarydiff}{\mathrm{max} (\gapsuff{binary}, \gapsep{binary})}
\renewcommand{\Pr}{\mathbb{P}}
\newcommand{\PPV}{\textsf{PPV}}
\newcommand{\NPV}{\textsf{NPV}}
\newcommand{\TPR}{\textsf{TPR}}
\newcommand{\FPR}{\textsf{FPR}}
\newcommand{\FOR}{\textsf{FOR}}
\newcommand{\FDR}{\textsf{FDR}}
\newcommand{\TNR}{\textsf{TNR}}
\newcommand{\FNR}{\textsf{FNR}}
\title{Beyond Impossibility: Balancing Sufficiency, Separation and Accuracy}
\author{%
 Limor Gultchin \\
  Department of Computer Science\\
  University of Oxford\\
  \texttt{limor.gultchin@gmail.com} \\
   \And
    Vincent Cohen-Addad \\
   Google \\
   \texttt{cohenaddad@google.com} \\
   \And
   Sophie Giffard-Roisin \\
   University of Grenoble-Alpes \\
   \texttt{sophie.giffard@univ-grenoble-alpes.fr} \\
   \And
   Varun Kanade \\
   Department of Computer Science\\
   University of Oxford \\
   \texttt{varunk@cs.ox.ac.uk} \\
   \And
   Frederik Mallmann-Trenn \\
   Department of Computer Science\\
   King's College London \\
   \texttt{frederik.mallmann-trenn@kcl.ac.uk} \\
}
\begin{document}
\maketitle

\begin{abstract}
Among the various aspects of algorithmic fairness studied in recent years, the tension between satisfying both \textit{sufficiency} and \textit{separation} -- e.g. the ratios of positive or negative predictive values, and false positive or false negative rates across groups -- has received much attention. Following a debate sparked by COMPAS, a criminal justice predictive system, the academic community has responded by laying out important theoretical understanding, showing that one cannot achieve both with an imperfect predictor when there is no equal distribution of labels across the groups. In this paper, we shed more light on what might be still possible beyond the impossibility -- the existence of a  trade-off means we should aim to find a good balance within it. After refining the existing theoretical result, we propose an objective that aims to balance \textit{sufficiency} and \textit{separation} measures, while maintaining similar accuracy levels. We show the use of such an objective in two empirical case studies, one involving a multi-objective framework, and the other fine-tuning of a model pre-trained for accuracy. We show promising results, where better trade-offs are achieved compared to existing alternatives.
\end{abstract}

\section{Introduction}

The advance of automatic decision making into social domains has made training and auditing Machine Learning (ML) systems that treat sensitive groups fairly a crucial research area. Fair and Responsible ML has been a growing research area in recent years \cite{dwork2012fairness, barocas-hardt-narayanan}, and academic research has fostered important debates around the use of ML for predicting sensitive information, such as for example recidivism in criminal justice systems.
In 2016, ProPublica made an important contribution to these debates by analyzing a criminal justice system's score allocation, called COMPAS,  and showing that it had twice as high False-Positive Rate ($\FPR$) (44.85 vs. 27.99 on the non-violent COMPAS scores) for black defendants compared to white defendants, while False-Negative Rate ($\FNR$) was twice as high for white defendants compared to black defendants (47.2 vs. 23.45 on the non-violent COMPAS scores) \citep{larson2016we}.
Northpointe, the developer of COMPAS, launched a response showing
that a different fairness measure, called calibration, was in fact satisfied by the system~\citep{flores2016false}. This immediately opened a basic question: what is the \emph{right}
fairness criterion for ML systems?


\begin{figure}
    \centering
    \includegraphics[width=0.3\textwidth]{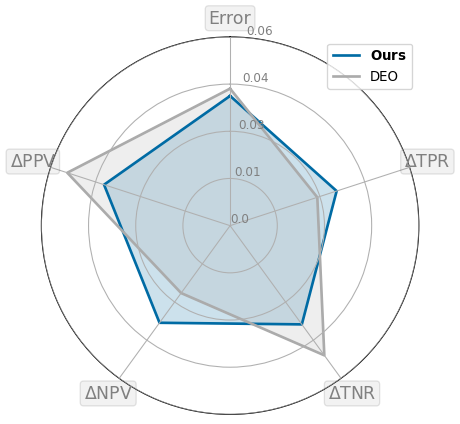}
    \caption{We propose, conceptually, that fairness criteria should aim to balance \textit{separation} and \textit{sufficiency} measures. We propose a criterion
    and show it often leads to better tradeoffs. In the plot above, we show how the method we call \textbf{Ours} 
    fares against a Difference of Equality of Opportunity (DEO) constraint \cite{Hardt_NIPS2016_9d268236}, which only involves one \textit{separation} measure, used in a multi-objective method on the dataset \texttt{Color-MNIST} in terms of error (top axis), \textit{separation} violations (TPR, FPR absolute group differences) (right), and \textit{sufficiency} violations (PPV, NPV absolute group differences) (left).}
    \vspace{-0.6cm}
    \label{fig:radar}
\end{figure}

The academic community jumped into the discussion and contributed a much needed theoretical understanding on these initial results. Multiple researchers showed that the two fairness criteria used in the COMPAS debate, calibration and error rates, were not coincidentally emerging in opposition to each other – when the labels are not equally spread across sensitive groups (as is often the case in criminal justice and other applications), one cannot have both perfect calibration and equal error rates at the same time (unless the predictor is perfect), as proven in a seminal impossibility theorem \cite{chouldechova2017fair, kleinberg-et-al:LIPIcs:2017:8156, corbett2017algorithmic}. The data distribution together with an ML classifier generates a probability distribution on the triple $(\hat{Y}, Y, A)$, where $\hat{Y}$ is the predicted label, $Y$ the true label, and $A$ the protected attribute.  The aforementioned fairness measures correspond to two types of conditional independence among these three quantities: \textit{Separation}, $\hat{Y} \independent A~|~ Y$ (related to balance for the positive/negative class, equalized odds, equality of opportunity, conditional procedure accuracy equality etc.), and \textit{Sufficiency} $Y \independent A~|~ \hat{Y}$ (related to calibration within groups, test-fair score, conditional use accuracy equality, etc.).\footnote{See \citep{barocas-hardt-narayanan} for a comprehensive survey, and the `dictionary of criteria' in Chapter 2 for a handy summary.} 

The impossibility result ruling out equal error rates and perfect calibration simultaneously in practice has had an important and beneficial impact on the development of the fairness literature. In a certain sense, this result puts the COMPAS debate to rest: Northpointe and ProPublica are interested in different  incompatible fairness criteria and the choice of Northpointe to focus on calibration can only be questioned outside the realm of mathematical reasoning.
Another consequence of the theorem has been that researchers have advocated picking a single arbitrary fairness definition and design machine learning models that satisfy this fairness definition.\footnote{For example, this one measure can be chosen to be the absolute difference across sensitive groups in one of the possible eight values implied by a standard confusion matrix for a binary classification problem.} 
Multiple papers thus either 1) recommend practitioners to stick to one group of fairness measures and disregard the others \cite{Pleiss_NIPS2017_b8b9c74a}; or 2) suggest that certain results are worse when trying to uphold more than one type of fairness constraint \cite{pmlr-v161-padh21a}, or 3) promote alternative approaches to fairness, partially to bypass this issue (e.g. Counterfactual Fairness \cite{kusner2017counterfactual}). 

Nevertheless, these resolutions are somewhat unsatisfactory: counterfactual fairness, for one, is a very principled way to look at the problem but requires knowledge of the data generation process, and thus may require strong assumptions that make practical implementation more difficult; the other approaches, focusing on one measure of fairness without requiring any guarantees on the others may, for example, allow claims regarding a model's fairness with respect to one (possibly ``cherry-picked'') criterion while being particularly unfair with respect to several other. Phrased more positively, we ask: Given a desired quality level for one fairness criterion (hence seen as a constraint), how well can we optimize the other criteria? 
We demonstrate that a better balance can be achieved by simultaneously optimizing multiple criteria (cf. Figure~\ref{fig:radar}). Concretely, this problem can also be illustrated with the COMPAS debate: Acknowledging that the COMPAS system achieves a certain (approximate) calibration as claimed by Northpointe, one may ask how well the system performs in terms of the equal-error-rates measure compared to other systems achieving a similar level of calibration. In other words, we could be satisfied with the COMPAS system if, given the level of calibration achieved, it is the ``best'' also with respect to equal-error-rates among all models that have a similar accuracy and calibration.

\begin{figure*}[!t]
    \centering
    \includegraphics[width=\textwidth]{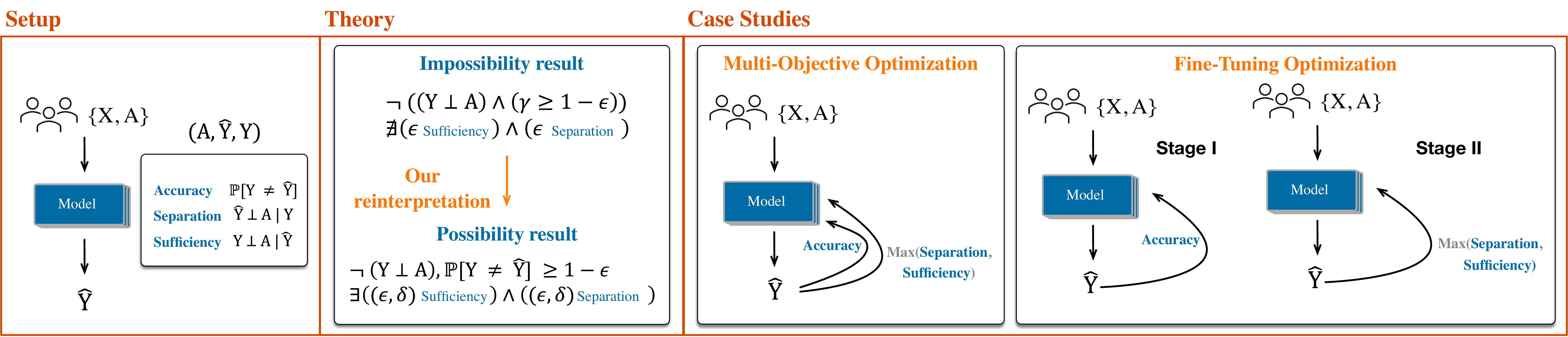}
    \caption{A general description of our contributions.}
    \vspace{-0.5cm}
    \label{fig:case_study_schema}
\end{figure*}

\subsection{Our contributions}
In this work, we introduce an approximate view on the calibration and equal-error-rates criteria that is particularly relevant in practical settings. This leads us to analyze how these quantities can be traded-off and how well can one quantity be approximated under the constraint that the other remains under some desired threshold. We show how to
achieve a satisfying balance between approximate versions of these fairness criteria and accuracy as a whole.
We focus on the measures $\gapsuff{y, \hat{y}, a} = |\Pr[Y=y \mid  \hat{Y}=\hat{y}, A=a] - \Pr[Y=y \mid \hat{Y}=y]|$ and $\gapsep{\hat{y}, y, a} = |\Pr[\hat{Y}=y \mid Y=y, A=a] - \Pr[\hat{Y}=y \mid Y=y]|$. We treat $\gapsuffmax = \max_{y, \hat{y}, a} \gapsuff{y, \hat{y}, a}$ and $\gapsepmax = \max_{\hat{y}, y, a} \gapsep{\hat{y}, y, a}$ as representing deviation from perfect \textit{sufficiency} and \textit{separation} and minimize these quantities. 

In Section~\ref{sec:background} we provide further details on related work. Section~\ref{sec:theory_results} delves into the impossibility
result, and offers a refinement of the theorem in terms of the above criteria:
\textit{separation}, \textit{sufficiency} and accuracy of the predictor. Our version of the
theorem can be interpreted as an approximate possibility result: if a
classifier can hold both \textit{sufficiency} and \textit{separation} approximately (i.e.: under
some prescribed threshold), it must be fairly accurate.\footnote{Assuming base
rates are not almost equally distributed across groups (a very unlikely
possibility under most fairness applications).} Stated alternatively, if one has
a fairly accurate classifier, it is possible to hold both \textit{sufficiency}
and \textit{separation} violations at small values. We therefore argue that we
should indeed consider what can be achieved in terms of the above measures 
beyond the strict impossibility. The implications of our theorem for the COMPAS
debate is the following: Given the accuracy and the level of
\textit{separation} that are achieved by Northpointe's predictor, our theory prescribes that a certain a minimal level of \textit{sufficiency} can be achieved. One may thus ask whether
Northpointe's predictor achieves this minimal level of \textit{sufficiency}.

Equipped with this theoretical insight, we design new loss functions for training classifiers that have both good accuracy 
and minimal fairness requirements for \textit{sufficiency} and \textit{separation} criteria (Sections~\ref{sec:experiments_methods} and \ref{sec:experiments_results}).
We propose the use of our new loss functions via an in-processing multi-objective framework \cite{pmlr-v161-padh21a}, as well as using one of them as a post-processing technique, fine-tuning an existing model with a fairness-dedicated objective. 
We show that our new loss function $\allgapmax$ yields classifiers achieving better trade-offs in terms of accuracy, \textit{separation} and \textit{sufficiency} compared to existing alternatives across the 4 datasets we studied: \texttt{COMPAS} \cite{larson2016we}, \texttt{Adult Income} \cite{Kohavi_10.5555/3001460.3001502}, \texttt{NELS} \cite{ingels1990national} and \texttt{Color-MNIST} \cite{Lecun1998_726791}.\footnote{\textit{Color-MNIST} is an adaptation of the original MNIST dataset to this setting, see full description in \ref{sec:experiments_datasets}.}

\section{Background}\label{sec:background}

\textbf{Constrained optimization under fairness notions.} Multiple works considered forms of constrained optimization of predictors to achieve one of the fairness notions we consider in this work \citep{zafar2017fairness2, wu2019convexity, donini2018empirical}. However, they consider achieving either the $\gapsuff{}$ or the $\gapsep{}$ family.
Previous authors moreover studied the \textbf{the fairness and accuracy trade-off via Pareto Frontiers} \citep{kearns2018preventing, kearns2019ethical}. Instead, we focus on predictors holding both fairness notions of $\gapsuff{}$ and $\gapsep{}$ in an approximate fashion, and propose optimization methods inspired by this idea. 
There were also more recent works on \textbf{Fairness relaxations}
\citep{lohaus2020too}, \textbf{Group and subgroup Fairness}
\citep{martinez2020minimax, diana2020convergent, kearns2018preventing, Yang_DBLP:journals/corr/abs-2006-13485} or \textbf{fair transformations of scores} \cite{Jiang_pmlr-v115-jiang20a}. While they all raise important and interesting points about how to hold one chosen family of notions in a properly relaxed version which comes with guarantees, or exploring subgroup fairness, 
our focus is in a way more fundamental, going back to the setting of the original impossibility result. 
One other set of methods that could enable the balancing of multiple fairness objectives are \textbf{Multi-objective approaches to fairness} \cite{Celis_10.1145/3287560.3287586, pmlr-v161-padh21a, Ruchte_DBLP:journals/corr/abs-2103-13392}. While the above works could handle the challenge of balancing \textit{separation} and \textit{sufficiency}, they do not directly address this question, and instead focus their experiments on the combination of other objectives (e.g., Demographic Parity (DP) and Equality of Opportunity (EOP) \cite{Hardt_NIPS2016_9d268236}). We believe this might be due to the direct or indirect influence of the impossibility result: \cite{kleinberg-et-al:LIPIcs:2017:8156} and \cite{chouldechova2017fair} are each cited over 1,000 times so far. Furthermore, the legacy it has left on the literature goes beyond the original result itself, including \cite{Pleiss_NIPS2017_b8b9c74a} which directly recommends picking one of the violation groups to focus on and give up on the other.\footnote{We provide more examples of the use of the impossibility result in the literature in the appendix.} We do not know of papers that try to balance both \textit{sufficiency} and \textit{separation} directly. 
There are \textbf{other works that attempt to look beyond the impossibility results} via other means. \cite{lazarreich_et_al:LIPIcs.FORC.2021.4} show how to provide scores (model output pre-threshold) satisfying calibration, and label predictions (post-threshold) satisfying equal error rates. \cite{Blum_DBLP:journals/corr/abs-1912-01094} suggest that when starting with a biased-dataset (according to definitions they provide), 
fairness-motivated constraints can lead to better results even if one cares only about accuracy. Finally, other works looked at \textbf{the fairness and accuracy trade-off} \cite{Menon_pmlr-v81-menon18a, chen2018classifier, Dutta_pmlr-v119-dutta20a, Wick_unlock_NEURIPS2019_373e4c5d}. 


\section{Theoretical Contribution}\label{sec:theory_results}
\subsection{Separation and Sufficiency}\label{sec:sep_suff}

In the following, let $A$, $Y$ and $\hat{Y}$ denote the random variable representing the
sensitive attribute, the ground truth label and the outcome of a prediction
model. We will denote by $a$, $y$ and $\hat{y}$ the values taken by these
random variables; for simplicity we will assume that each of these can only
take the values $\{0, 1\}$. 


\emph{Separation} refers to the conditional independence of $\hat{Y}$ and $A$
given $Y$, i.e. $\hat{Y} \Perp A | Y$. This in particular, implies that for
any $a, y, \hat{y}$, $\Pr[\hat{Y}= \hat{y} | Y=y, A=a ] = \Pr[\hat{Y} = \hat{y} | Y = y]$. For an assignment of values $y$, $\hat{y}$ and $a$, we denote the \textit{separation} gap by $\gapsep{\hat{y}, y, a} = |\Pr[\hat{Y} = \hat{y} | Y = y, A=a] - \Pr[\hat{Y}=y|Y=y]|$, and we denote by $\gapsepmax = \max_{\hat{y}, y, a} \gapsep{\hat{y}, y, a}$. In particular, \textit{separation} holds if and only if $\gapsepmax = 0$. A second criterion called \emph{sufficiency} refers to the conditional
independence of $Y$ and $A$ given $\hat{Y}$, i.e. $Y \indep A| \hat{Y}$. As in
the case of \textit{separation}, we can define $\gapsuff{y, \hat{y}, a} = |\Pr[Y = y |
\hat{Y}=\hat{y}, A= a] - \Pr[Y=y | \hat{Y}=\hat{y}]|$, and denote by
$\gapsuffmax = \max_{y, \hat{y}, a} \gapsuff{y,\hat{y}, a}$. \textit{Sufficiency} holds if and only if $\gapsuffmax = 0$. 

To measure the performance of the classifiers, we can measure the
\emph{precision} and \emph{recall} of the positive and negative classes. For
the positive class, these are denoted by $\PPV$ (positive predictive value) and $\TPR$ (true positive rate), whereas for the
negative class these are denote by $\NPV$ (negative predictive value) and $\TNR$ (true negative rate).%
%
%
The complements of these notions are also meaningful. In particular, we have (see \cite{tharwat2020classification}):
\begin{itemize}
	\item $\PPV = \Pr[Y = 1 | \hat{Y}=1]$, $\FDR = 1 - \PPV = \Pr[Y = 0 | \hat{Y}=1]$ (false discovery rate)
	\item $\TPR = \Pr[\hat{Y} = 1 | Y=1]$, $\FNR = 1 - \TPR = \Pr[\hat{Y}=0 | Y = 1]$ (false negative rate)
	\item $\NPV = \Pr[Y = 0 | \hat{Y}=0]$, $\FOR = \Pr[Y=1 | \hat{Y}=0]$ (false omission rate)
	\item $\TNR = \Pr[\hat{Y} = 0 | Y=0]$, $\FPR = \Pr[\hat{Y}=1 | Y=0]$ (false positive rate)
\end{itemize}

We also consider these measures conditioned on the sensitive attribute, e.g. $\PPV_a = \Pr[Y = 1 | \hat{Y} = 1, A = a]$; the other measures are defined similarly.

We say that a set of values $\{a_1, \ldots, a_n \}$ are $(\eps,
\delta)$-approximately equal if for every $i, j$, we have that $a_i \leq a_j
e^{\eps} + \delta$. We say that $(\hat{Y}, Y, A)$ satisfies $(\epsilon,
\delta)$-\textit{separation}, if the four sets $\left\{ \Pr\left[\hat{Y} = \hat{y} | Y=y
, A=a\right] | a \in \{0, 1\}\right\}$ of two values for each possible setting of
$y, \hat{y} \in \{0, 1\}$ are $(\eps, \delta)$-approximately equal. Likewise,
we say that $(\hat{Y}, Y, A)$ satisfies $(\eps, \delta)$-\textit{sufficiency}, if the
four sets $\left\{\Pr\left[Y=y|\hat{Y}=\hat{y}, A=a\right] | a \in\{0, 1\}
\right\}$ of two values for each possible setting of $y, \hat{y} \in \{0, 1\}$
are $(\eps, \delta)$-approximately equal. We denote by $\rho = \Pr[Y = 1]/\Pr[Y =0]$ and by $\rho_a = \Pr[Y = 1  | A
= a]/\Pr[Y = 0 | A= a]$. Note that division by 0 will not occur due to the assumptions, as stated in Thm.~\ref{thm:main}. We refer to $\rho$ (resp. $\rho_a$) as the
\emph{base odds} (resp. group conditional base odds). 

\subsection{Theoretical Result: Refining the Impossibility Result}
As mentioned in the introduction, several authors have shown impossibility
results for the \textit{separation} and \textit{sufficiency} criteria.
\citeauthor{kleinberg-et-al:LIPIcs:2017:8156} further proved an approximate version of the impossibility result, but at heart of it was showing that the following quantity $\gamma = \TPR \cdot \PPV + \FNR \cdot \FOR$ is close to $1$ whenever the base rates differ substantially and approximate versions of both calibration and \textit{sufficiency} hold. Moreover, $\gamma$ is 1 if and only if 
the learned classifier is perfect ($Y = \hat{Y}$) or perfectly flipped ($Y = 1 - \hat{Y}$).  However, the interpretation is more difficult in the approximate case.

\paragraph{How does \citeauthor{kleinberg-et-al:LIPIcs:2017:8156}'s $\gamma$ differ from our notion of accuracy?}\label{par:ext_significance}

 
 A concrete example of the difference between our result and \cite{kleinberg-et-al:LIPIcs:2017:8156} is the following. Consider a classifier achieving the following performances: number of true negatives = 1, number of false positives = 1, number of false negatives = N, number of true positives = 1, then $\gamma$ is very close to 1 (essentially larger than $(\frac{N}{N+1})^2$), while PPV = 0.5, leading to a large gap between $\gamma$ and the actual performance of the classifier. On the other hand, one can come up with an example where the inverse phenomenon happens: $\gamma$ is close to 0.5 while PPV is close to 1. This is arguably equally bad: It is precisely a phenomenon that we want to prevent in e.g., a criminal justice setting (such as described by COMPAS). Motivated by practice, we would like to show that the accuracy is close to 1 whenever the base rates differ substantially and approximate versions of both calibration and \textit{sufficiency} hold, meaning that a predictor with some level of accuracy could satisfy simultaneously some minimal approximate calibration and \textit{sufficiency}. We thus present the following theorem.



\begin{thm} 
\label{thm:main}
	Assume that for some fixed constant $c \in (0, 1/2)$, the random variable
	$(\hat{Y}, Y, A)$ satisfies for all $a, y, \hat{y} \in \{0, 1\}$, $\Pr[Y = y
	| A =a], \Pr[\hat{Y} = \hat{y} | A = a] \in (c, 1-c)$.  
	Let $\eps > 0$ be sufficiently small, in particular $\eps < c$ and $0 < \delta = o(\eps)$.
	If $(\hat{Y}, Y, A)$ satisfies
	both $(\eps, \delta)$-\textit{sufficiency} and \textit{separation}, then at least one of the following holds: 
	\begin{enumerate}
		\item For each $a$ and $\mu \in \{\PPV, \NPV, \TPR, \TNR\}$, we have $\mu_a \geq 1 - O(\eps + \delta/\eps)$,
\item For each $a$ and $\mu \in \{\PPV, \NPV, \TPR, \TNR\}$, we have $\mu_a \leq  O(\eps + \delta/\eps)$ (flipped classifiers),
		\item The set $\{\rho_0, \rho_1 \}$ is $(O(\eps + \delta/\eps), 0)$-approximately equal, where $\rho_a = \Pr[Y = 1  | A = a]/\Pr[Y = 0 | A = a]$.
	\end{enumerate}
\end{thm}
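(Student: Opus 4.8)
The plan is to reduce the statement to elementary algebra on the per-group confusion matrices, combined with the single structural fact that the base odds $\rho_a=\Pr[Y=1\mid A=a]/\Pr[Y=0\mid A=a]$ are pinned in the bounded interval $(c/(1-c),\,(1-c)/c)$ by the hypothesis $\Pr[Y=y\mid A=a]\in(c,1-c)$.

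First I would record the two standard confusion-matrix identities, which hold per group purely by Bayes' rule (writing $\FDR_a=1-\PPV_a$ and $\FOR_a=1-\NPV_a$):
\[
  \rho_a \;=\; \frac{\FPR_a\cdot\PPV_a}{\TPR_a\cdot\FDR_a}
  \qquad\text{and}\qquad
  \rho_a \;=\; \frac{\TNR_a\cdot\FOR_a}{\FNR_a\cdot\NPV_a}.
\]
Both follow by expanding $\Pr[\hat Y=1\mid A=a]=\TPR_a\Pr[Y=1\mid A=a]+\FPR_a\Pr[Y=0\mid A=a]$ and the analogue for $\hat Y=0$. The point is that $(\epsilon,\delta)$-separation controls the four quantities $\FPR,\FNR,\TPR,\TNR$ across groups, while $(\epsilon,\delta)$-sufficiency controls $\PPV,\NPV,\FDR,\FOR$ across groups, and each identity above mixes both families — this is what couples the two hypotheses. (The argument uses the approximate criteria only through these eight cross-group comparisons, so it is insensitive to whether separation is read as exact or as $(\epsilon,\delta)$-approximate.)

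The key elementary lemma I would prove is a dichotomy: if $q_0\le q_1 e^{\epsilon}+\delta$ and $q_1\le q_0 e^{\epsilon}+\delta$, then either $q_0/q_1,\,q_1/q_0\in e^{\pm O(\epsilon)}$ (``$q$ balanced''), or $q_0,q_1=O(\delta/\epsilon)$ (``$q$ tiny'') — the second horn is exactly where the $\delta/\epsilon$ term is born, from dividing the additive error $\delta$ by a quantity as small as $\epsilon$. Applying this to all eight quantities and then taking the ratio of the first identity at $a=0$ and $a=1$ writes $\rho_0/\rho_1$ as a product of four per-quantity ratios; since $\rho_0/\rho_1$ is bounded, if all four of $\FPR,\TPR,\PPV,\FDR$ were balanced we would get $\rho_0/\rho_1\in e^{\pm O(\epsilon+\delta/\epsilon)}$, which is case~3 of the theorem. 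Hence either case~3 holds, or one of $\{\FPR,\TPR,\PPV,\FDR\}$ is tiny, and symmetrically (from the second identity) one of $\{\TNR,\FNR,\NPV,\FOR\}$ is tiny; since $\PPV,\FDR$ (resp.\ $\NPV,\FOR$) cannot both be tiny, only a short list of combinations survives.

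Finally I would run the case analysis: in each surviving case, knowing two of the eight quantities are tiny, re-substitute into the two identities. Because $\rho_a$ is bounded away from both $0$ and $\infty$, each substitution forces a further quantity to be $O(\delta/\epsilon)$ — each step only multiplies the bound by a constant such as $\rho_a$, $\rho_a^{-1}$, or $1/\PPV_a\le2$ (valid once $\FDR_a$ is known tiny), so after boundedly many steps the bound is still $O(\delta/\epsilon)$. Chasing this through, every case either propagates to ``all of $\FPR_a,\FNR_a,\FDR_a,\FOR_a$ are $O(\epsilon+\delta/\epsilon)$'' (case~1) or to ``all of them are $1-O(\epsilon+\delta/\epsilon)$'' (case~2, the flipped classifiers); the remaining ``mixed'' configurations — near-perfect on one corner of the confusion matrix but near-flipped on another — are impossible because they push $\rho_a$ outside its bounded range. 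The main obstacle is precisely this last step: the enumeration is routine but long, and one must check carefully both that the $O(\delta/\epsilon)$ constants stay bounded along the chains of substitutions and that the hypothesis $\Pr[Y=y\mid A=a]\in(c,1-c)$ genuinely eliminates every mixed case — this being the second, and essential, place where ``base odds not almost equal'' is used.
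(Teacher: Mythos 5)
Your proposal is correct and follows essentially the same route as the paper's proof: both rest on the two Bayes-rule identities $\rho_a = \frac{\FPR_a\,\PPV_a}{\TPR_a\,\FDR_a} = \frac{\TNR_a\,\FOR_a}{\FNR_a\,\NPV_a}$, on a claim that $(\eps,\delta)$-approximately equal quantities bounded below by $\Omega(\eps)$ (your $\Omega(\delta/\eps)$ threshold plays the same role) have ratio $e^{\pm O(\eps+\delta/\eps)}$, and on a case analysis over which confusion-matrix entries are small, with the bounded base odds ruling out the mixed configurations. The paper merely organizes the cases around whether $\FPR_0$ and $\Pr[\hat{Y}=1, Y=1 \mid A=0]$ are below $O(\eps)$ rather than via your symmetric balanced-versus-tiny dichotomy over all eight rates, but the mathematical content is the same.
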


We note that the second property is unavoidable. A classifier with accuracy $0$, i.e. $\hat{Y} = 1 - Y$ giving a flipped classifier satisfies \textit{separation} and \textit{sufficiency}. The following straightforward observation also allows us to re-frame the result in more standard notion of accuracy, i.e. $\Pr[\hat{Y} \neq Y]$.

\begin{observation}\label{obs:accuracy}
Assume that for all $a$, $\TPR_a \geq 1- \epsilon$ and $\TNR_a \geq 1-\epsilon$, then the accuracy $\mathrm{acc} = \Pr[\hat{Y} \neq Y] \geq 1 - \epsilon$. Likewise, if  for all $a$, $\TPR_a \leq \epsilon$ and $\TNR_a \leq \epsilon$, then the accuracy $\mathrm{acc} \leq \epsilon$. 
\end{observation}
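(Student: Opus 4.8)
The plan is to prove the identity by two applications of the law of total probability, so that $\TPR_a$ and $\TNR_a$ appear explicitly, and then to bound term by term. First I would condition on the protected attribute and write $\Pr[\hat Y = Y] = \sum_{a\in\{0,1\}} \Pr[A=a]\,\Pr[\hat Y = Y \mid A = a]$, which is legitimate since $\Pr[A=a]>0$ (indeed $\Pr[Y=y\mid A=a]\in(c,1-c)$ is assumed throughout, so all the conditioning events below have positive probability). Then, for each fixed $a$, I would split the agreement event $\{\hat Y = Y\}$ according to the value of the true label $Y$:
\[
\Pr[\hat Y = Y \mid A = a] = \Pr[\hat Y = 1 \mid Y=1, A=a]\,\Pr[Y=1\mid A=a] + \Pr[\hat Y = 0 \mid Y=0, A=a]\,\Pr[Y=0\mid A=a],
\]
which is exactly $\TPR_a\,\Pr[Y=1\mid A=a] + \TNR_a\,\Pr[Y=0\mid A=a]$ by the definitions of the group-conditional rates.

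Next I would use the hypotheses. In the first case, $\TPR_a \ge 1-\epsilon$ and $\TNR_a \ge 1-\epsilon$ for every $a$, so each summand above is at least $(1-\epsilon)\,\Pr[Y=1\mid A=a]$ respectively $(1-\epsilon)\,\Pr[Y=0\mid A=a]$; adding them and using $\Pr[Y=1\mid A=a]+\Pr[Y=0\mid A=a]=1$ gives $\Pr[\hat Y = Y \mid A = a] \ge 1-\epsilon$ for each $a$. Averaging over $a$ with weights $\Pr[A=a]$, which sum to $1$, yields $\Pr[\hat Y = Y] \ge 1-\epsilon$, i.e. the accuracy (agreement probability) is at least $1-\epsilon$. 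For the flipped statement I would run the identical computation but replace the lower bounds $\TPR_a,\TNR_a \ge 1-\epsilon$ by the upper bounds $\TPR_a,\TNR_a \le \epsilon$, obtaining $\Pr[\hat Y = Y \mid A=a] \le \epsilon$ for each $a$ and hence $\Pr[\hat Y = Y] \le \epsilon$ after averaging.

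There is no real obstacle here: the statement is an immediate consequence of the chain rule for conditional probabilities together with convexity (averaging) over the two values of $A$, and the only points requiring a word of care are that the relevant conditioning events are non-degenerate (guaranteed by the blanket assumption $\Pr[Y=y\mid A=a]\in(c,1-c)$) and that ``accuracy'' should be read as the agreement probability $\Pr[\hat Y = Y]$, so that the two halves of the observation are the natural mirror images of each other.
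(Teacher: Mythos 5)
Your proof is correct and follows essentially the same route as the paper's: both decompose the agreement probability over the groups $a$ and the two label values, identify the resulting terms with $\TPR_a$ and $\TNR_a$ times the corresponding label probabilities, and bound termwise (the paper works with joint probabilities $\Pr[\hat Y = y, Y = y, A = a]$ where you condition on $A=a$ first, but these are the same computation). Your remark that ``accuracy'' must be read as $\Pr[\hat Y = Y]$ rather than the $\Pr[\hat Y \neq Y]$ written in the statement correctly identifies a typo that the paper's own proof implicitly confirms.
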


The proof of Theorem~\ref{thm:main} needs careful analysis, but is fundamentally based on simple relationships between the measures. The full proof appears in Appendix~\ref{sec:theoretical_result}.
However, we provide an overview of the key idea. Assuming that the quantities under consideration are well defined, i.e. no division by $0$ occurs, one can easily establish relationships between the measures defined above 
using Bayes' rule. For e.g., as shown in \cite{chouldechova2017fair}, we get,
\begin{align*}
    \FPR_a = \rho_a \cdot \frac{\FDR_a}{\PPV_a} \cdot \TPR_a.
\end{align*}

If \textit{separation} and \textit{sufficiency} were to hold exactly, all of these quantities are the same regardless of the group $a$. The only way this can happen is if the base odds, $\rho_a$ are the same, or $\FPR_a = 0$ for all $a$. Similar observations about other quantities shows that we must have a perfect classifier. These equations as written down do not hold if some quantities, e.g. $\PPV_a$ are $0$. A careful analysis shows that that actually happens when we have a perfectly flipped classifier, i.e. $\hat{Y} = 1 - Y$.

\section{Methods}\label{sec:experiments_methods}

Following our theoretical result, a natural next step would be to consider an optimization procedure for classifiers that takes advantage of this rediscovered relationship between $\Delta$ violations and accuracy, as viewed through $\PPV, \NPV, \TPR$ and $\TNR$. 
Thus, we suggest to simply minimize the upper bound of $\gapsuff{}{}$ and $\gapsep{}{}$, as a possible regularizer added to classification accuracy, an objective in a multi-objective framework, or a finetuning objective:
\begin{align}
    \mathcal{L}(\theta, Y, A) =\allgapmax
\end{align}
$\hat{Y}$ for the purpose of $\Delta^{notion}_{c, a}$ is defined as $S = \theta^T X > \tau$, and $\tau = 0.5$ is our default value for classification thresholding.

Given that we study the binary group case, in the following we implemented the objective $\allgapmax$ as the max between $\gapsuff{y, \hat{y}} = |\Pr[Y=y | \hat{Y}=\hat{y}, A=1] - \Pr[Y=y | \hat{Y}=\hat{y}, A=0]|$ and $\gapsep{y, \hat{y}} = |\Pr[Y=y | \hat{Y}=\hat{y}, A=1] - \Pr[Y=y | \hat{Y}=\hat{y}, A=0]|$. We name this modification $\allgapmaxbinarydiff$. Alternating between the two formulations of the objectives leads only to slight differences, as can be seen in results included in the appendix.

\textbf{Using $\allgapmax$ as part of a multi-objective approach.} The balancing act we aim to achieve is compatible with many works published in recent years supporting fairness constraints, such as Multi-Objective techniques. 
Although multiple such methods were suggested recently \cite{Celis_10.1145/3287560.3287586, Ruchte_DBLP:journals/corr/abs-2103-13392}, we chose to focus on the work by \citeauthor{pmlr-v161-padh21a}. MAMO, the algorithm developed in \cite{pmlr-v161-padh21a} is model-agnostic, proposes a novel hyperbolic tangent-based relaxation to fairness criteria, and is accompanied by an easy-to-use implementation that is modular and allows for adaptation and extensions. We test the performance of our $\allgapmax$ as an objective to be optimized with MAMO. 




\textbf{Finetuning approach.}\label{sec:finetuning_obj}
 We also consider $\allgapmax$ as an addition to the common binary cross-entropy (BCE) optimization, making it a finetuning objective. 
 It can easily follow common model training procedures, saving in costs and implementation difficulties, effectively being a post-processing step of trained models. One point of inspiration for this approach is provided by existing work that shows unconstrained accuracy-focused optimization of predictors tend to minimize $\gapsuff{}$ as a natural by-product \citep{liu2019implicit}. Therefore, we propose to finetune a predictor trained with BCE, which is expected to already minimize $\gapsuff{}$, by forcing the upper bound over $\allgapmax$ to be lower. Crucially, it can act as a safeguard against too sharp of an increase in one $\Delta$ violation in response to applying a fairness constraint, e.g. just minimizing $\gapsep{}$ may lead to a high increase in $\gapsuff{}$ and we would like to upper bound such increases.

\section{Experiments}\label{sec:experiments_results}


\begin{table*}
\centering
\resizebox{\textwidth}{!}{%
\begin{tabular}{@{}|l|lllllll|@{}}
\toprule
\rowcolor[HTML]{CFCDCD} 
\cellcolor[HTML]{EFEFEF}Dataset &
  \multicolumn{1}{l|}{\cellcolor[HTML]{CFCFCF}Number of samples} &
  \multicolumn{1}{l|}{\cellcolor[HTML]{CFCFCF}Training set size} &
  \multicolumn{1}{l|}{\cellcolor[HTML]{CFCDCD}Validation/Test set size} &
  \multicolumn{1}{l|}{\cellcolor[HTML]{CFCDCD}\begin{tabular}[c]{@{}l@{}}P(Y=0), \\ P(Y=1)\end{tabular}} &
  \multicolumn{1}{l|}{\cellcolor[HTML]{CFCDCD}\begin{tabular}[c]{@{}l@{}}P(A=0), \\ P(A=1)\end{tabular}} &
  \multicolumn{1}{l|}{\cellcolor[HTML]{CFCDCD}\begin{tabular}[c]{@{}l@{}}P(Y=1 | A=0), \\ P(Y=1 | A=1)\end{tabular}} &
  $\gamma_0$, $\gamma_1$ \\ \midrule
 \cellcolor[HTML]{CFCDCD}\texttt{Color-MNIST} &
  {\color[HTML]{000000} 60,000} &
  {\color[HTML]{000000} 40,000} &
  20,000 &
  0.49, 0.51 &
  0.51, 0.49 &
  0.69, 0.29 &
  2.25, 0.41 \\ \cmidrule(r){1-1}
\cellcolor[HTML]{CFCDCD}\texttt{COMPAS} &
  {\color[HTML]{000000} 6,172} &
  {\color[HTML]{000000} 4,115} &
  2,057 &
  0.53, 0.47 &
  0.6, 0.4 &
  0.53, 0.39 &
  1.1, 0.64 \\ \cmidrule(r){1-1}
\cellcolor[HTML]{CFCDCD}\texttt{NELS} &
  {\color[HTML]{000000} 4,743} &
  {\color[HTML]{000000} 3,162} &
  1,581 &
  0.533, 0.467 &
  0.099, 0.901 &
  0.311, 0.484 &
  0.45, 0.94 \\ \cmidrule(r){1-1}
\cellcolor[HTML]{CFCDCD}\texttt{Adult Income} &
  {\color[HTML]{000000} 48,842} &
  {\color[HTML]{000000} 32,562} &
  16,280 &
  0.76, 0.239 &
  0.67, 0.33 &
  0.3, 0.11 &
  0.12, 0.44 \\  \bottomrule
\end{tabular}%
}
\vspace{-0.05cm}
\caption{
Description of datasets. 
\vspace{-0.15cm}
}
\label{tab:datasets}
\end{table*}

\begin{figure*}[t!]
     \centering
     \subfloat[\texttt{Color-MNIST}]{
     \begin{minipage}[c]{.24\textwidth}
         \centering
         \includegraphics[width=\textwidth]{imgs/mnist_maxfull_radar_plt.png}
     \end{minipage}
    }
     \subfloat[\texttt{COMPAS}]{
     \begin{minipage}[c]{.24\textwidth}
         \centering
         \includegraphics[width=\textwidth]{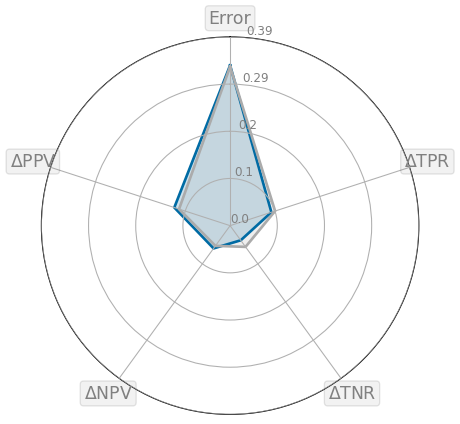}
     \end{minipage}}
    \subfloat[\texttt{NELS}]{
     \begin{minipage}[c]{.24\textwidth}
         \centering
         \includegraphics[width=\textwidth]{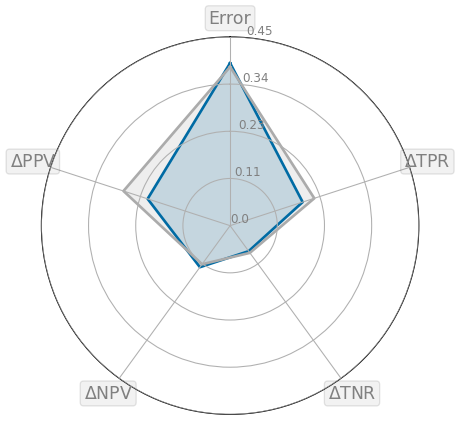}
     \end{minipage}
    }
    \subfloat[\texttt{Adult Income}]{
     \begin{minipage}[c]{.24\textwidth}
         \centering
         \includegraphics[width=\textwidth]{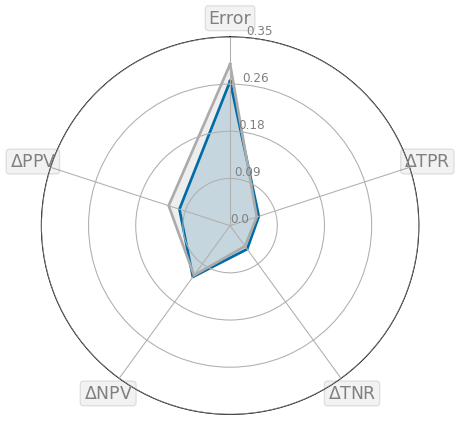}
     \end{minipage}
    }
      \caption{Error and fairness violations in terms of $\PPV$, $\NPV$, $\FPR$ and $\TPR$ absolute difference across groups for DEO \cite{Hardt_NIPS2016_9d268236} vs. $\allgapmaxbinarydiff$ (indicated as \textbf{Ours} in the legend above). Values plotted are the mean of each metric for 40 runs of all datasets, except for \texttt{Color-MNIST}, which is averaged over 25 runs. We look for a centered shape and smaller area where applicable.
    \vspace{-0.5cm}
      } \label{fig:MAMO_Min_Max}
\end{figure*}

All experiments were completed on a MacBook Pro 2019 using the Pytorch library for Python, and could be easily performed on any modern CPU configuration, aside for the \texttt{Color-MNIST} experiments which involved training a Convolutional Neural Network on a GPU. In all the following plots, $\Delta$s defined as, e.g. $\text{TPR} = |\Pr[\hat{Y}=1| A=1, Y=1] - \Pr[\hat{Y}=1| A=0, Y=1]|$).

\subsection{Datasets}\label{sec:experiments_datasets}

See descriptive statistics in Table \ref{tab:datasets}. All are available under creative commons license CC$0$. We elaborate on \texttt{Color-MNIST} as it is the only dataset that we curate for fair ML experiments. We defer exact details of the rest to the appendix and will make datasets and processing code available. 
\textbf{Color-MNIST.} The first dataset we considered is a tweaked-version of the original MNIST dataset \cite{Lecun1998_726791}. In order to clearly demonstrate the dynamics the theory points to, this setting enables us to study a quality predictor that can achieve high accuracy, and thus also small fairness violations. At the same time, it also involves a larger dataset of the more challenging image modality, compared to the often tabular data studied in fairness contexts, and is thus closer to certain real-world applications. 
We create a version of the original dataset where images are colored based on their label. First, we binarize the label, to make all images labeled as $0-4$ the negative class, and all others, $5-9$ the positive class. Next, all images are assigned a color such that, initially, the images in the positive class are assigned a red color, and all images in the negative class are assigned a green color. However, with probability 0.3, we flip the color assigned to each image. We use the assigned colors as the sensitive attribute,
and by the procedure above we end up with a dataset that includes different base-rates for each label-color combination, such that each sensitive group is more associated with a different label (see Table \ref{tab:datasets}). Similar procedures were described in \cite{arjovsky2020invariant, wang2020fairness} for different purposes.

 \begin{figure*}[t!]
 \centering
 \subfloat[\texttt{Color-MNIST}]{
     \begin{minipage}[c]{.242\textwidth}
         \centering
        \includegraphics[width=\textwidth]{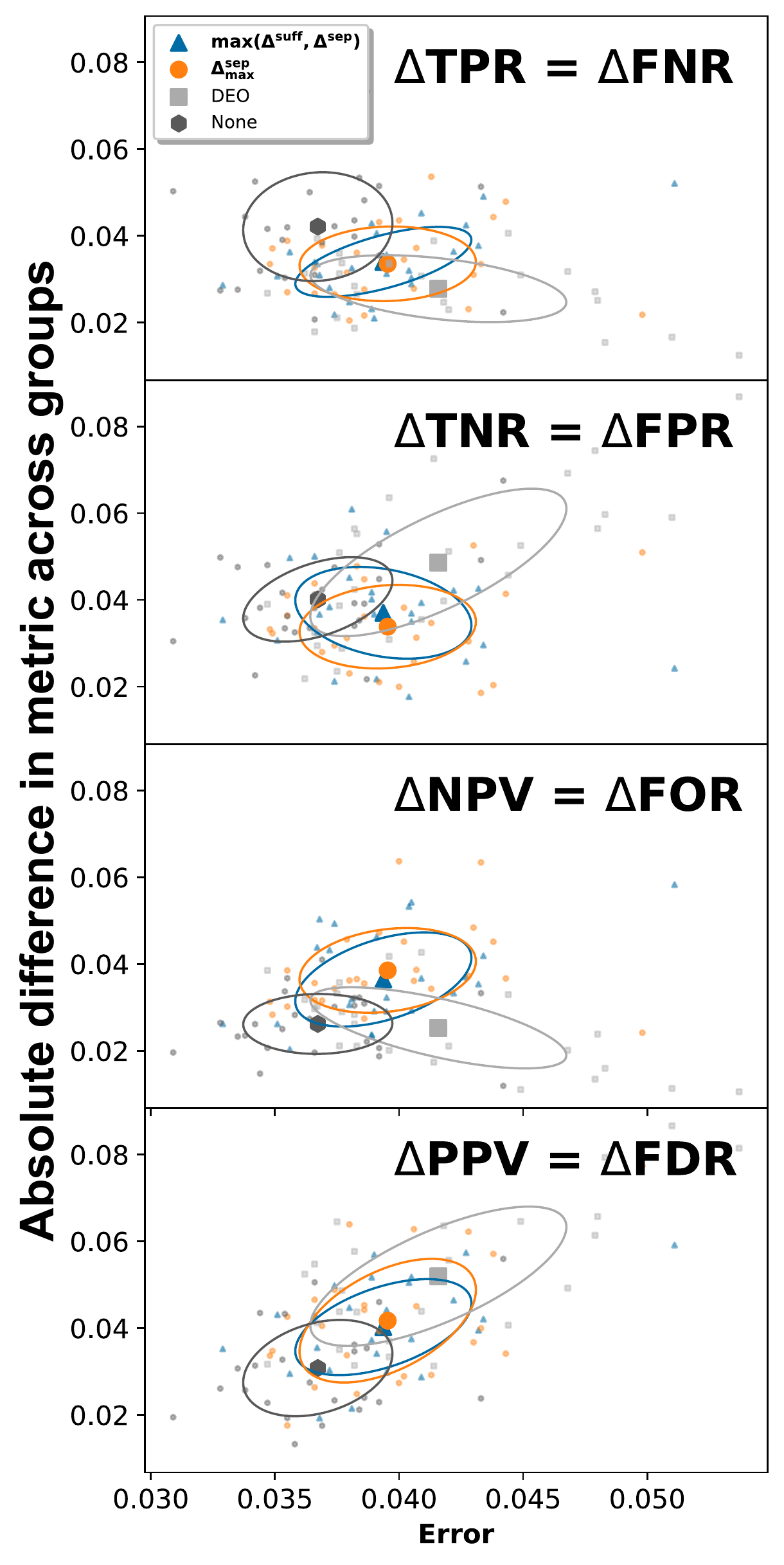}
     \end{minipage}}
     \subfloat[\texttt{COMPAS}]{
     \begin{minipage}[c]{.22\textwidth}
         \centering
        \includegraphics[width=\textwidth]{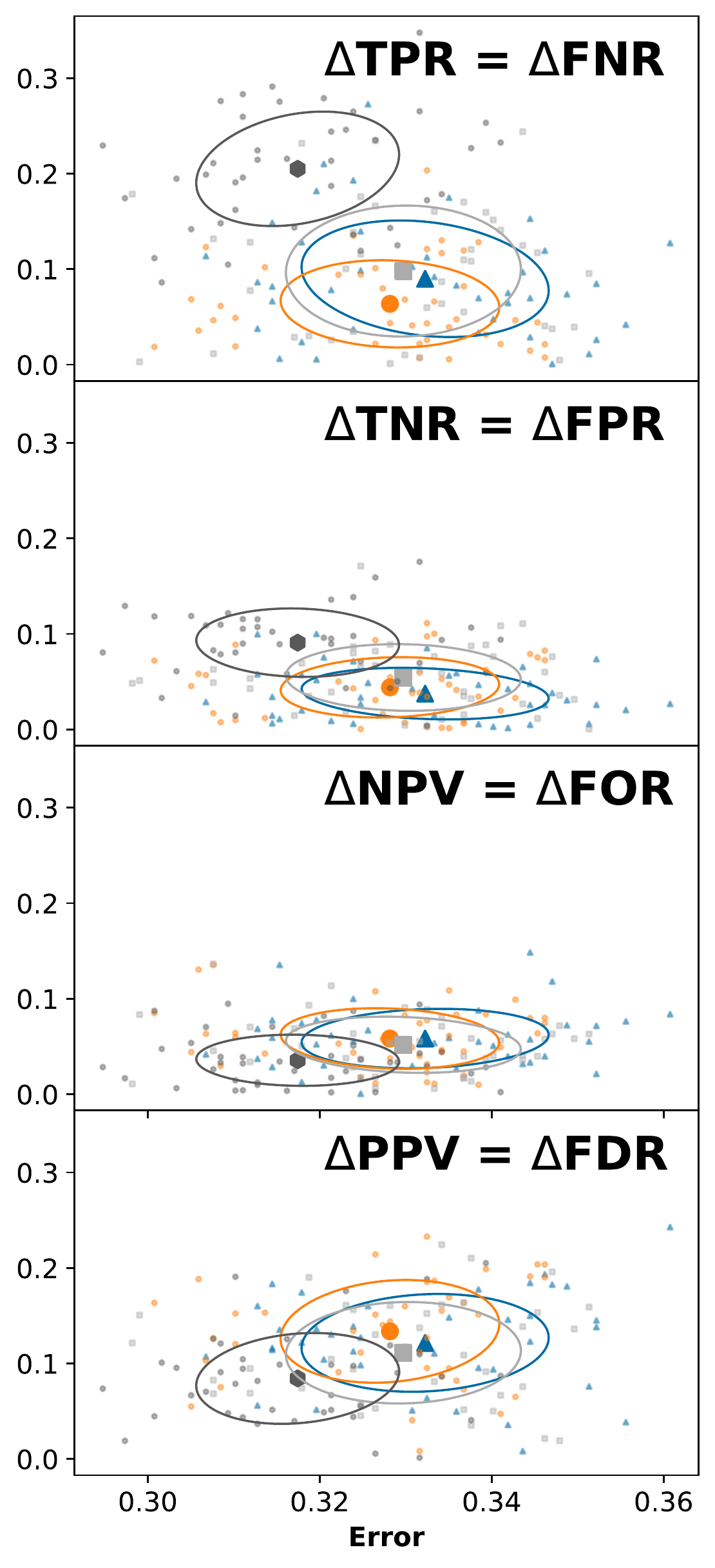}
     \end{minipage}}
    \subfloat[\texttt{NELS}]{
     \begin{minipage}[c]{.22\textwidth}
         \centering
        \includegraphics[width=\textwidth]{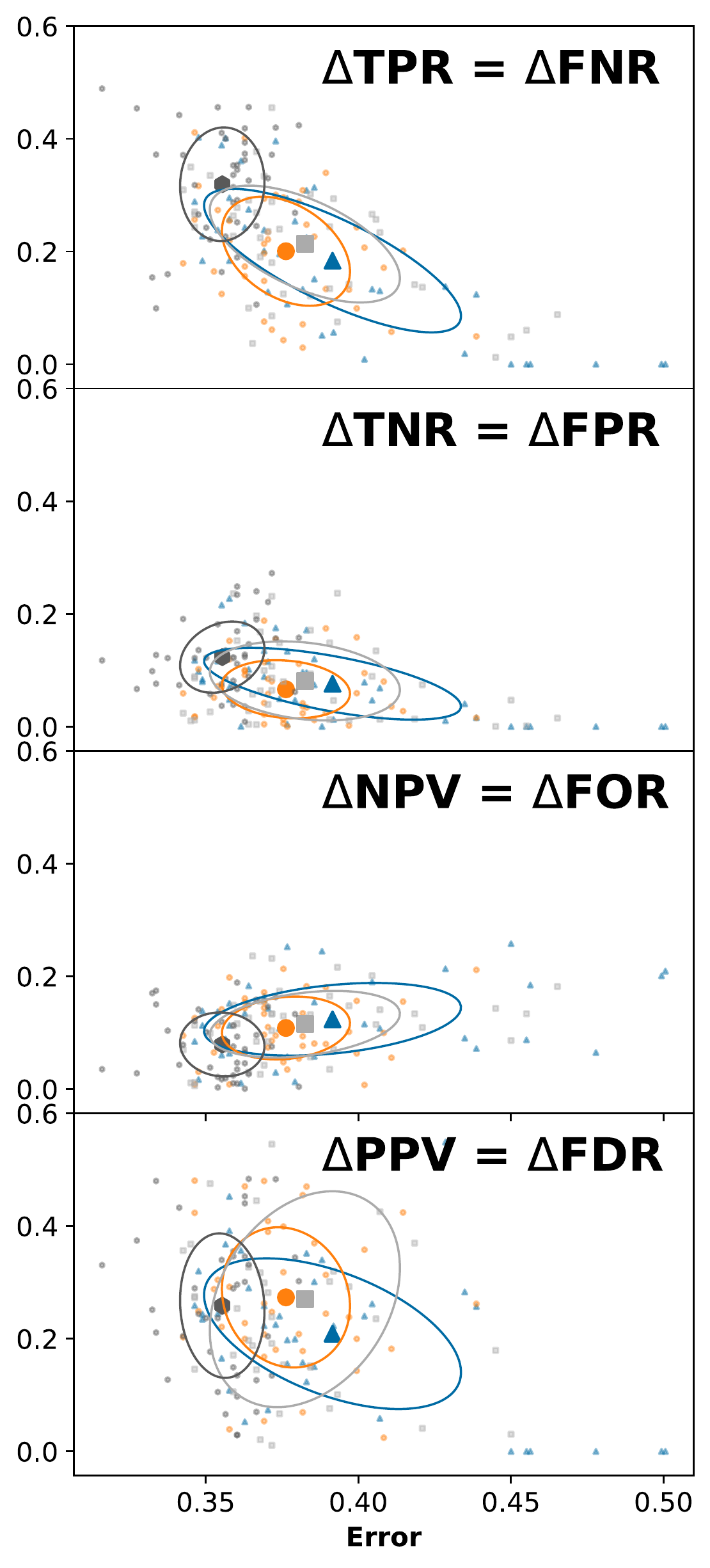}
     \end{minipage}
    }
    \subfloat[\texttt{Adult Income}]{
     \begin{minipage}[c]{.22\textwidth}
         \centering
        \includegraphics[width=\textwidth]{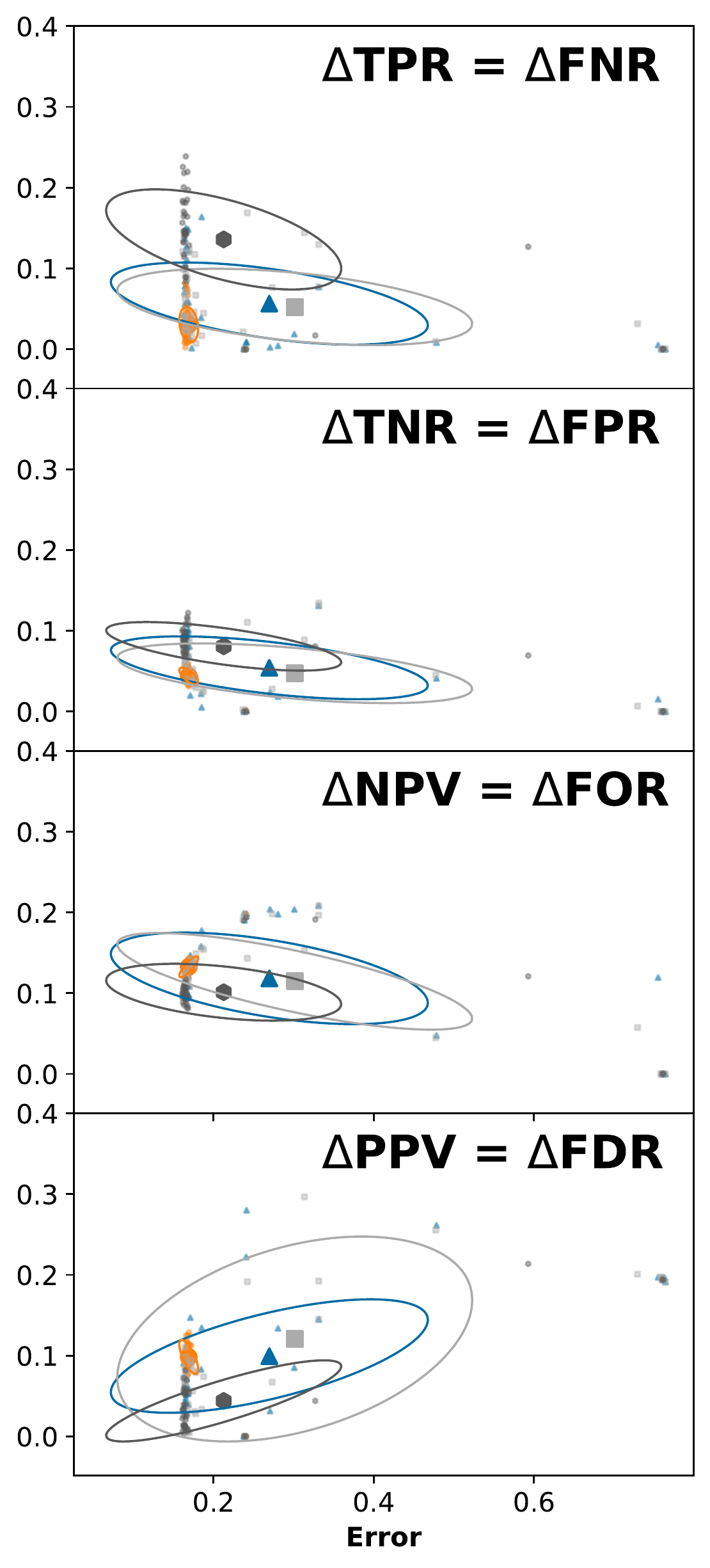}
     \end{minipage}
    }
    \caption{Multi-Objective results, using DEO, $\allgapmaxbinarydiff$ or $\gapsepmax$ as an additional objective to Binary-Cross Entropy, implemented via \citeauthor{pmlr-v161-padh21a}'s framework, MAMO. 
    Location of shapes in the middle of ellipses is the mean of 40 runs (25 runs for \texttt{Color-MNIST}). 
    The first 2 rows correspond to \textit{separation} measures; the bottom 2 to \textit{sufficiency} measures. 
    'None' refers to the unconstrained case (training with Binary Cross-Entropy loss). Ellipses correspond to 1-std. of the distribution of results over runs.  y-axis: absolute difference of metric across groups; x-axis: error (1-accuracy).}   
    \vspace{-0.2cm}
    \label{fig:MAMO_1col} 
\end{figure*}


\subsection{Results Multi-Objective} \label{sec:MAMO_results} 
 As one can see in Figures~\ref{fig:MAMO_Min_Max} and \ref{fig:MAMO_1col}, $\allgapmaxbinarydiff$ can help achieve a better balance across all $\Delta$ violations compared to DEO as well as other possible constraints or an unconstrained approach. We directly compare in this case to the setting and results presented in \cite{pmlr-v161-padh21a} (See Figure 2 in their manuscript. We change their error bars to ellipses, visualizing how those errors are correlated). We adapt their code, and keep most of their original hyperparameter choice for \texttt{COMPAS} and \texttt{Adult Income}. We average results over 40 runs, and set the regularizer coefficient $\lambda$ to 0.1 and 0.3 respectively. For \texttt{Color-MNIST} we use 10 training epochs, and set the hyperparameter $\lambda$'s value to 2. In all settings, we achieve lower violations in at least 2 out of the 4 possible violations, without trading much accuracy or much increase in the other measures. We sacrifice relatively little in accuracy compared to the unconstrained case; we also report better or equivalent accuracy compared to DEO. We further experimented with a variety of alternatives 
 but found $\allgapmaxbinarydiff$ to lead to better trade-offs. We present results for 
 alternative criteria in the appendix, and include below the 3 most promising constraints: $\allgapmaxbinarydiff$ $\gapsepmax$ and DEO. 

\textbf{Color-MNIST.} \underline{DEO vs. $\allgapmaxbinarydiff$.} As expected, DEO achieves lower $\TPR$/$\FNR$\footnote{The DEO constraint directly involves the absolute difference between $\TPR$/$\FNR$ for both groups as indicated by the sensitive attribute. In the following, whenever we refer to the constraint itself we will call it DEO, but when referring to the measured violation in $\TPR$/$\FNR$ we will simply refer to $\TPR$/$\FNR$.} and $\NPV$/$\FOR$ values (by about 0.01 points), but at the cost of higher $\PPV$/$\FDR$ values (\textit{sufficiency}) (0.012 points difference, and 0.02 compared to the unconstrained model’s violation) as well as $\TNR$/$\FPR$ values (\textit{separation}) (0.0116 difference from $\allgapmaxbinarydiff$).
 In other words, DEO achieves high fluctuations of two pairs of violations. On the other hand, $\allgapmaxbinarydiff$ and $\gapsepmax$ keep all 4 violations within a similar range, between 0.033 and 0.04, thus balancing the violations across all measures and keeping accuracy rather close to the unconstrained model training (0.037 vs. 0.039). \underline{$\allgapmaxbinarydiff$ vs. $\gapsepmax$.} $\allgapmaxbinarydiff$ and $\gapsepmax$ stay close together in this dataset (up to 0.003 difference). However, notice how $\gapsepmax$ leads to slightly lower \textit{separation} values in this case, while $\allgapmaxbinarydiff$ achieves slightly lower \textit{sufficiency} values. Thus we see $\allgapmaxbinarydiff$ achieving a better balance across the 4 violation pairs. \underline{The variance across runs} is much higher for the DEO objective. $\allgapmaxbinarydiff$ and $\gapsepmax$ are within the same range.

\textbf{COMPAS.} \underline{DEO vs. $\allgapmaxbinarydiff$.} Notice that $\TPR$/$\FNR$ is the biggest original violation for the unconstrained model ($\sim$ 0.2). Thus, DEO and $\allgapmaxbinarydiff$ achieve similar results for that criterion (0.098 vs. 0.09). However, $\allgapmaxbinarydiff$ additionally improves group-difference in $\TNR$/$\FPR$ (0.054 vs. 0.037), without much sacrifice of increase of group-differences in $\PPV$/$\FDR$ (0.11 vs. 0.12). Considering Figure~\ref{fig:MAMO_1col} more closely, we see $\allgapmaxbinarydiff$ offers lower violation for all \textit{separation} measures (top two rows), while not trading off much of the \textit{sufficiency} violations. \underline{$\gapsepmax$ vs. $\allgapmaxbinarydiff$} Compared to $\gapsepmax$, we see similar \textit{separation} values to $\allgapmaxbinarydiff$ ($\TPR$/$\FNR$, $\TNR$/$\FPR$). Yet, $\gapsepmax$ achieves smaller $\TPR$/$\FNR$ values compared to $\allgapmaxbinarydiff$ (0.06 vs. 0.089), but does so at the prices of slightly higher $\TNR$/$\FPR$ violation (0.044 vs. 0.037), and more importantly higher violations in the \textit{sufficiency} violations $\PPV$/$\FDR$ (0.134 vs. 0.122). \underline{The variance across runs} is overall similar for the 3 constraint methods. Thus, we see $\allgapmaxbinarydiff$ as achieving a better balance overall, although $\gapsepmax$ could be a reasonable alternative for specific use cases.

\textbf{NELS.} \underline{DEO vs. $\allgapmaxbinarydiff$.} Both approaches attain similar results for $\TNR$/$\FPR$ (0.081 vs. 0.076) and $\NPV$/$\FOR$ (0.115 vs. 0.124), however, there is a slight advantage for $\allgapmaxbinarydiff$ for $\TPR$/$\FNR$ 
and especially for $\PPV$/$\FDR$ (0.27 vs. 0.209). \underline{$\gapsepmax$ vs. $\allgapmaxbinarydiff$.}  The two objectives show similar results for $\TNR$/$\FPR$ 
and $\NPV$/$\FOR$ 
with slightly smaller values for $\gapsepmax$. Nevertheless, there is an even greater advantage for $\allgapmaxbinarydiff$ for $\TPR$/$\FNR$ 
and especially for $\PPV$/$\FDR$ (0.273 vs. 0.209). \underline{Variance across runs} is quite high for $\allgapmaxbinarydiff$ for this setting, yet seems to be dominated by 3 runs out of 40 (which dominate the scale of the plot as a whole), representing runs that are both high error and particularly high $\PPV$/$\FDR$ violations.

\begin{figure*}[t!]
     \centering
     \subfloat[\texttt{Color-MNIST}]{
     \begin{minipage}[c]{.24\textwidth}
         \centering
         \includegraphics[width=\textwidth]{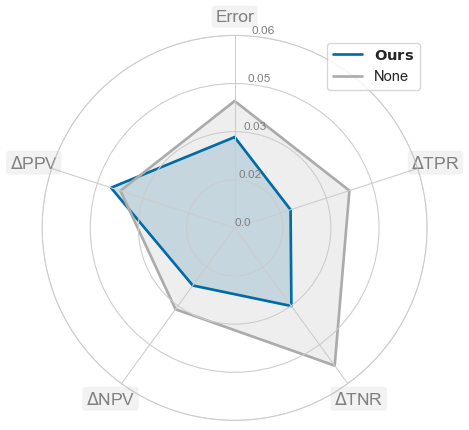}
     \end{minipage}
    }
     \subfloat[\texttt{COMPAS}]{
     \begin{minipage}[c]{.24\textwidth}
         \centering
         \includegraphics[width=\textwidth]{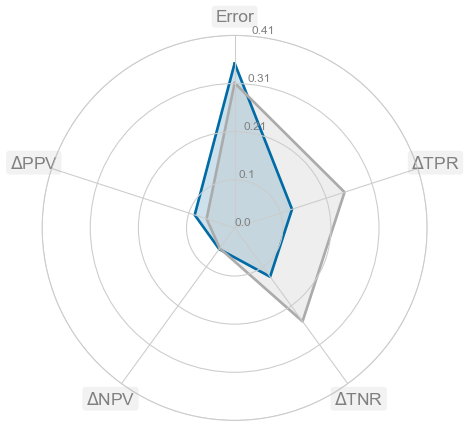}
     \end{minipage}}
    \subfloat[\texttt{NELS}]{
     \begin{minipage}[c]{.24\textwidth}
         \centering
         \includegraphics[width=\textwidth]{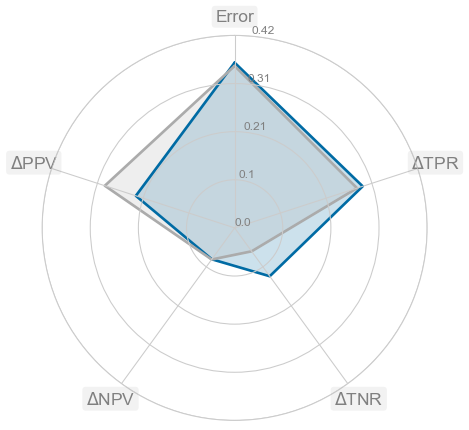}
     \end{minipage}
    }

      \caption{Finetune experiment with $\allgapmaxbinarydiff$ objective. Error and fairness violations in terms of $\PPV$, $\NPV$, $\FPR$ and $\TPR$ absolute difference across groups for BCE vs. $\allgapmaxbinarydiff$ (indicated as \textbf{Ours} in the legend above). We look for greater 'balance' of shape around the center, and smaller area where applicable.
      \vspace{-0.5cm}
      } \label{fig:fntn_binarydiff_full}
\end{figure*}

\textbf{Adult Income.} \underline{DEO and $\allgapmaxbinarydiff$.} \textit{Separation} values in this case remain quite close.
As for \textit{sufficiency}, DEO achieves slightly lower $\NPV$/$\FOR$ (0.114 vs. 0.118); notice, however, that $\allgapmaxbinarydiff$ achieves far lower $\PPV$/$\FDR$ violation (0.12 vs. 0.099), and that’s with an even lower variance in performance. \underline{$\gapsepmax$ and $\allgapmaxbinarydiff$.} It is rather clear $\gapsepmax$ leads to lower \textit{separation} values, while $\allgapmaxbinarydiff$ achieves one far lower \textit{sufficiency} violation.  The accuracy achieved by $\gapsepmax$ is higher here, and so $\gapsepmax$ might be a better choice for this dataset overall. \underline{Variance across runs.} For $\allgapmaxbinarydiff$ and DEO the variance is quite large for the \texttt{Adult Income} dataset, with a slightly higher variance for DEO. $\gapsepmax$ achieves remarkably low variance, and also a high accuracy that is very competitive with the unconstrained case. Thus, it could be an advantage for this criterion, depending on the specifics of the use case. However, two things are worth noting: the variance of $\allgapmaxbinarydiff$ and DEO seem to be driven by a few runs that achieve particularly high error. The \texttt{Adult Income} dataset is highly imbalanced in terms of label distribution (the ratio of negative to positive labels is 3.16, see Table \ref{tab:datasets}), which might explain why it can 
be vulnerable to different random splits of the dataset into train, validation and test.

\subsection{Results Finetuning}\label{sec:fntn_results}
As a second possibility of putting our objective to the test, we consider using $\allgapmaxbinarydiff$ as part of a finetuning approach, following standard Binary Cross Entropy training. We test the performance of such an objective on multiple datasets as we did above (see main results achieved on the test set in 
Figure~\ref{fig:fntn_binarydiff_full}),
but include only the discussion for the \texttt{COMPAS} dataset in the main text; see the rest of the discussion in the Appendix. Notice how $\allgapmaxbinarydiff$ leads to improved mean violations of a greater scale than small increase in error, or even improvement for the \texttt{Color-MNIST} dataset. These results are overall in line with the performance of the Multi-Objective approach of the previous section. Given the performance of $\allgapmaxbinarydiff$ above, we focus on it as the objective used in this finetuning setting, and compare its results to the baseline of training a predictor via BCE throughout training. Similarly, given the high variance and imbalanced nature of the \texttt{Adult Income} dataset we did not include it in the following. 
\textbf{COMPAS.} We applied the method discussed in Section \ref{sec:finetuning_obj} to the \texttt{COMPAS} dataset to train a predictor. In the previous experiment, we mostly adopted the hyper-parameter settings used in \cite{pmlr-v161-padh21a} for ease of comparison. In this setting, we performed our own hyperparameter tuning, as described in the Appendix. The chosen settings will also be included in the accompanying code. 
We see similar training dynamics in terms of accuracy and $\allgapmaxbinarydiff$ on both training and validation sets (the indicated validation results are the mean over the 3 folds and the 5 random initializations, see Figure~\ref{fig:COMPAS_fntn_binarydiff_curves} in the Appendix). We can see some trade-off between high accuracy and low $\allgapmaxbinarydiff$, yet \texttt{finetune} remains well above 60\% accuracy while achieving a decrease of about 40\% in $\allgapmaxbinarydiff$ on the train/validation splits. 
Inspecting $\allgapmaxbinarydiff$ and accuracy results over the test set in Figure~\ref{fig:fntn_binarydiff_full}, we see that our proposed method is competitive with BCE. It presents the better balanced violations achieved by $\allgapmaxbinarydiff$, captured by the more centered and smaller area it covers in the radar plot compared to the unconstrained BCE, while the top edge representing the error remains quite close. 

\section{Conclusion}\label{sec:conclusion}
We have demonstrated that beyond the impossibility result emanating from the COMPAS debate, there is a possibility as well. While one cannot hold perfect calibration and still have equal error rates in settings where a perfect classifier is not available and labels are not distributed equally across groups, the approximate versions of \textit{sufficiency} and \textit{separation} allow us to look for better balance within the well-known trade-off. 
We have shown how the $\allgapmax$ criterion can be used within a Multi-Objective setting as well as a finetuning regime. We emphasize that this is one possible approach to minimizing both \textit{sufficiency} and \textit{separation} violations, but one could prioritize other forms of trade-offs, e.g., designing objectives that put greater emphasis on one prediction class or type of violation, based on the use case. However, we aimed to demonstrate the advantage of trying to optimize both in the general case. 





\bibliographystyle{plainnat}
\bibliography{main}

\newpage
\appendix
\onecolumn
\section{Appendix}

In what follows, we include additional discussion, proofs of our theoretical results, as well as additional details about our experiments. 

\section{How is the classic impossibility result cited and used in the fairness literature?}
\cite{Pleiss_NIPS2017_b8b9c74a} showed the strongest resistance to an attempt to balance \textit{separation} and \textit{sufficiency} violations, stating multiple times that a practitioner may want to focus on one. ``Calibration and error rate constraints are in most cases mutually incompatible goals'', \citeauthor{Pleiss_NIPS2017_b8b9c74a} advised, and proceeded to conclude, ``[i]n practical settings, it may be advisable to choose only one of these goals rather than attempting to achieve some relaxed notion of both''. Other works used similar language to justify different fairness approaches, including \cite{kusner2017counterfactual} who recommended Counterfactual Fairness as a way to bypass the impossibility result on multiple occasions; \cite{pmlr-v161-padh21a} recently discussed performance of their mult-objective approach to fairness in terms of the impossibility result, explaining that ``For the COMPAS dataset, M-MAMO-fair performs well for DEO but not for DDP...  it is not possible to satisfy DP and error rate based metrics simultaneously if the base rate of classification is different for different groups... This explains the poor performance of the M-MAMO-fair algorithm on the Dutch dataset as well as the fact that it performs well only on DEO and not on DDP for the COMPAS dataset''. We advocate here both a closer look at the fairness violation resulting from any fairness constraint (see our Figure~\ref{fig:MAMO_1col} compared to Figure 2 in the original), as well as an additional criterion aiming at achieving a balance between \textit{sufficiency} and \textit{separation} measures, instead of abandoning the hope to find such balance altogether.

\section{Proofs from Section~\ref{sec:theory_results}}

\subsection{Proof of Theorem~\ref{thm:main}}\label{sec:theoretical_result}
\begin{proof}[Proof of Theorem~\ref{thm:main}]
	Throughout the proof, we assume that $c_1, c_2, \ldots$ are sufficiently
	small constants. We will prove the statements for $\TPR$ and $\PPV$; the
	statements regarding $\NPV, \TNR$ hold similarly. Without loss of
	generality assume that $\FPR_0 \leq \FPR_1$. We have,
	\begin{align*}
		\FPR_0 &= \frac{\Pr[\hat{Y} = 1, Y=0 | A =0]}{\Pr[Y = 0 | A=0]}.
	\end{align*}

	We break the proof into several cases. 

	\textbf{Case 1:} $\FPR_0 \leq c_1 \epsilon$. In this case, $\FPR_1 \leq c_1 \epsilon e^\eps + \delta$. For $a \in \{0, 1\}$, we have that 
	\begin{align*}
		\PPV_a &= \frac{\Pr[\hat{Y}=1,Y=1|A = a]}{\Pr[\hat{Y}=1 | A=a]}= 1 - \frac{\Pr[ \hat{Y}=1,Y=0|A = a]}{\Pr[\hat{Y}=1 | A=a]}  = 1 - \frac{\Pr[Y=0 | A=a]}{\Pr[\hat{Y}=1 | A=a]} \FPR_a.
	\end{align*}
	From there, using the assumption that $\Pr[Y=0|A =a], \Pr[\hat{Y}=1 | A=a]
	\in (c, 1-c)$,  and the facts that $e^\epsilon = \Theta(1)$ and $\delta =
	O(\delta/\epsilon)$, it follows that $\PPV_a \geq 1 - O(\eps +
	\delta/\eps)$. Since $\FPR_0 \leq c_1 \epsilon$, we have that
	$\Pr[\hat{Y}=1, Y=0 | A=0] \leq (1 - c)\cdot c_1 \epsilon$. Then, provided
	$c_1$ is small enough, e.g. $c_1 \leq c/2$, we have $\Pr[\hat{Y} = 0, Y = 0
	| A] \geq c/2$. we have, by Bayes' rule, 
	\begin{align*}
		1 - \TPR_a = \FNR_a &=  \frac{\Pr[\hat{Y}=0, Y=1 | A=a]}{\Pr[Y=1 | A=a]} \\
		&=  \frac{\Pr[Y=1 | \hat{Y}=0, A=a] \cdot \Pr[\hat{Y}=0| A=a]}{\rho_a \cdot  \Pr[Y=0 | A=a] }\\
				&=  \frac{\Pr[Y=1 | \hat{Y}=0, A=a] }{\rho_a \cdot  \Pr[Y=0 | A=a] } \cdot
				\frac{\Pr[\hat{Y}=0 | Y=0, A=a] \cdot \Pr[Y=0 | A=a]}{\Pr[{Y}=0 | \hat{Y}=0, A=a]} \\
								  &= \Pr[\hat{Y}= 0 | Y=0, A=a] \cdot \frac{1}{\rho_a} \cdot \frac{\Pr[Y =1 | \hat{Y}=0, A=a ]}{\Pr[Y=0 | \hat{Y}=0, A=a]} \\
								  &= \TNR_a \cdot \frac{1}{\rho_a} \cdot \frac{\FOR_a}{\NPV_a}.
								  \intertext{The above can be rewritten slightly to get,}
		\rho_a &= \frac{\TNR_a}{\FNR_a} \cdot \frac{\FOR_a}{\NPV_a}.
	\end{align*}
	If $\FNR_a \leq c_2 \epsilon$, for either $a = 0$ or $a = 1$, then it 
	clearly follows that $\TPR_a \geq 1 - O( \epsilon + \delta/\epsilon)$ for $a \in \{0,
	1\}$. Otherwise, it must be that case that $\Pr[\hat{Y}=0, Y = 1 | A = a]
	\geq c_2 c \eps$. Together with $\Pr[\hat{Y}=0, Y= 0 | A= a] \geq c/2$, this
	means that each of $\TNR_a$, $\FNR_a$, $\FOR_a$, and $\NPV_a$ are at least
	$c_3 \eps$ for some constant $c_3$. 
	Since $(\hat{Y}, Y, A)$ satisfies
	both $(\eps, \delta)$-\textit{sufficiency} and \textit{separation} we get
 for every $q\in \{ \TNR, \FOR, \FNR, \NPV\}$ and $a' \neq a, a' \in \{0,1\}$ we have that
	$\{q_a, q_{a'}\}$ are ($\eps,\delta$)-approximately equal.
	Hence, by Claim~\ref{claim:Pferd},
	\[e^{-(\eps + (\delta/(c\eps)) } \leq q_a/q_{a'} \leq e^{(\eps + (\delta/(c\eps))}.
 \]
	Therefore, using the claim below it follows that $\{\rho_0, \rho_1\}$ are
	$(O(\eps + \delta/\eps), 0)$-approximately equal.

	\begin{claim}\label{claim:Pferd} If $\{v_1, v_2\}$ are $(\eps, \delta)$-approximately equal and $\min\{v_1, v_2\} \geq c \eps$, we have $e^{-(\eps + (\delta/(c\eps)) } \leq v_1/v_2 \leq e^{(\eps + (\delta/(c\eps))}$.
	\end{claim}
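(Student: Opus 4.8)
The plan is to unwind the definition of $(\eps,\delta)$-approximate equality and reduce everything to one elementary inequality about the exponential function. By definition, $\{v_1,v_2\}$ being $(\eps,\delta)$-approximately equal means $v_1 \le v_2 e^\eps + \delta$ and $v_2 \le v_1 e^\eps + \delta$. These two inequalities are symmetric in $v_1$ and $v_2$, so it suffices to prove the upper bound $v_1/v_2 \le e^{\eps + \delta/(c\eps)}$; the lower bound $v_1/v_2 \ge e^{-(\eps + \delta/(c\eps))}$ then follows immediately by swapping the roles of $v_1$ and $v_2$.

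First I would divide the inequality $v_1 \le v_2 e^\eps + \delta$ by $v_2$, which is legitimate since $v_2 \ge c\eps > 0$, to obtain $v_1/v_2 \le e^\eps + \delta/v_2$. Using $v_2 \ge c\eps$ once more gives $\delta/v_2 \le \delta/(c\eps)$, hence $v_1/v_2 \le e^\eps + \delta/(c\eps)$. It then remains to check that $e^\eps + \delta/(c\eps) \le e^{\eps + \delta/(c\eps)}$, which is the inequality $e^x + y \le e^{x+y}$ applied with $x = \eps \ge 0$ and $y = \delta/(c\eps) \ge 0$; this in turn follows from $e^{x+y} = e^x e^y \ge e^x(1+y) = e^x + y e^x \ge e^x + y$, where the first inequality uses $e^y \ge 1+y$ and the last uses $e^x \ge 1$ for $x \ge 0$. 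Chaining the two bounds yields $v_1/v_2 \le e^{\eps + \delta/(c\eps)}$, and the symmetric argument applied to $v_2 \le v_1 e^\eps + \delta$ yields $v_2/v_1 \le e^{\eps + \delta/(c\eps)}$, i.e. $v_1/v_2 \ge e^{-(\eps + \delta/(c\eps))}$, completing the proof.

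I do not expect any real obstacle here: the statement is an elementary manipulation, and the only point requiring a moment's care is the final exponential inequality $e^x + y \le e^{x+y}$, which needs $x = \eps \ge 0$ so that $e^\eps \ge 1$ — this is guaranteed since $\eps > 0$ in the hypotheses of Theorem~\ref{thm:main}. The specific scaling $\delta/(c\eps)$ in the bound, rather than $\delta$ divided by any larger quantity, is forced by the extremal case $v_2 = c\eps$, so the constant cannot be improved without a stronger lower bound on $\min\{v_1,v_2\}$.
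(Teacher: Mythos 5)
Your proof is correct and follows essentially the same route as the paper's: bound the additive slack $\delta$ by $\delta/(c\eps)$ times the smaller value using $\min\{v_1,v_2\}\ge c\eps$, then absorb the sum $e^\eps + \delta/(c\eps)$ into $e^{\eps+\delta/(c\eps)}$ via $1+y\le e^y$ and $e^\eps\ge 1$. The only cosmetic difference is that you argue both directions symmetrically where the paper fixes $v_1\le v_2$ without loss of generality.
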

	\begin{proof}
		Without loss of generality, let $v_1 \leq v_2$. Then we have,
		\begin{align*}
			v_1 \leq v_2 &\leq v_1 e^\eps + \delta  \leq v_1 e^\eps + v_1 \cdot \frac{\delta}{ c\eps} \leq v_1 \cdot e^{\eps + \frac{\delta}{c \eps}},
		\end{align*}
		where we used that $1+x/e^\eps\leq 1+x \leq e^x$.\end{proof}

	\textbf{Case 2a}: Assume that $\FPR_0 \geq c_1 \eps$; further assume that
	$\Pr[\hat{Y}=1, Y=1 | A=0] \leq c_4 \epsilon$. The latter assumption implies
	that $\PPV_0 \leq c_5 \epsilon$ and $\TPR_0 \leq c_6 \epsilon$. Using the fact that $\{\PPV_0, \PPV_1\}$  are $(\eps, \delta)$-approximately equal and $\{\TPR_0, \TPR_1\}$ are $(\eps, \delta)$-approximately equal, we get the required result. 

	\textbf{Case 2b}: Assume that $\FPR_0 \geq c_1 \eps$; further assume that
	$\Pr[\hat{Y}=1, Y=1 | A=0] \geq c_4 \epsilon$. Then, as in Case 1, we can write,
	\begin{align*}
		\FPR_a &= \frac{\Pr[\hat{Y}=1, Y=0 | A=a]}{\Pr[Y=0 | A=a]}  \\
				 &= \Pr[\hat{Y}=1 | Y=1, A=a] \cdot \rho_a \cdot \frac{\Pr[Y=0 | \hat{Y}=1, A=a]}{\Pr[Y=1| \hat{Y}=1, A=a]} \\
				 &= \TPR_a \cdot \rho_a \cdot \frac{\FDR_a}{\PPV_a}.
				 \intertext{From the above, we get,}
		\rho_a &= \frac{\FPR_a}{\TPR_a} \cdot \frac{\PPV_a}{\FDR_a}.
	\end{align*}

	Under the conditions, each of $\TPR_a, \FPR_a, \PPV_a, \FDR_a$ is at least some $c_7 \eps$, which using the claim above implies that $\{\rho_0, \rho_1\}$ are $(O(\eps  + \delta/\eps), 0)$-approximately equal.
\end{proof}

\subsection{Proof of Observation~\ref{obs:accuracy}}
\begin{proof}[Proof of Observation~\ref{obs:accuracy}]
We have
\begin{align*}
	\mathrm{acc} &= \Pr[\hat Y= Y] = \sum_a \left(\Pr[\hat Y=0, Y = 0, A=a] + \Pr[\hat Y=1, Y = 1, A=a]\right) \\
&
= \sum_a \left(\TNR_a \cdot \Pr[Y=0,A=a]+ \TPR_a \cdot \Pr[Y=1,A=a] \right)  \geq 1-\eps. 
\end{align*}
The proof of the second part follows almost identically. 
\end{proof}

\section{Additional Experimental Details}

\subsection{Dataset Information}
 \paragraph{COMPAS.} A natural interesting choice for a dataset to investigate this approach is the one that ignited the debate to begin with: the \texttt{COMPAS} dataset as studied by ProPublica \citep{larson2016we}\footnote{We use the \textsc{compas scores two years} dataset from \url{https://github.com/propublica/compas-analysis}.}. We apply minimal pre-processing, and finally obtain a dataset of 9 features, including \textsc{jail in, jail out, age, prior counts, days b screening arrest, charge degree, race, age category} and \textsc{sex}. We use \textsc{race} as the protected attribute $A$, and \textsc{two year recidivism} as the target $Y$. For the purpose of this analysis, we only consider samples where $A$ was either \textsc{African-American} or \textsc{Caucasian}. The dataset is overall balanced in terms of group sizes, but with unequal base rates, as can be seen in Table \ref{tab:datasets}. 

\paragraph{NELS.} As an additional dataset we consider the \texttt{NELS} dataset, or the US
Department of Education’s National Education
Longitudinal Study of 1988 \citep{ingels1990national} as well as its followups. The dataset was used in \citep{kleinberg2018algorithmic}, and we follow the pre-processing steps taken by the authors in the accompanying code\footnote{The original code is at \url{https://www.openicpsr.org/openicpsr/project/114435/version/V1/view.}}. It consists of 427 features, including the protected attribute \textsc{race}, as well as \textsc{high-school grades, course taking patterns, extracurriculars}, and \textsc{standardized tests in Math, Reading, Science, Social Studies}. The target for prediction is \textsc{gpa}. The only change we make to \citeauthor{kleinberg2018algorithmic}'s procedure is setting the threshold for binarizing the target \textsc{gpa} slightly higher: while \citeauthor{kleinberg2018algorithmic} consider setting the threshold at ``At least mostly B's received'' we set it at ``At least A's and B's received''\footnote{A and B are used in this context as grades in American educational institutions; another way to view the difference is setting the threshold for $Y=1$ at $>3.25$ instead of $>2.75$ in \citeauthor{kleinberg2018algorithmic}.}. The resulting dataset is overall balanced in terms of the prediction classes, but unbalanced in base rate and highly unbalanced group sizes. See Table \ref{tab:datasets}.

\paragraph{Adult Income.} We adopt the \texttt{Adult Income} dataset \cite{Kohavi_10.5555/3001460.3001502} for the purpose of a fairness application from \cite{pmlr-v161-padh21a}, and thus follow their pre-processing steps. The resulting dataset consists of 48,842 samples, and 49 features, with a combination of categorical and continuous variables. The target for prediction is a binarized \textsc{Income} variable, and the sensitive attribute in this case is \textsc{sex}\footnote{\citeauthor{pmlr-v161-padh21a} also allow for using \textsc{race} as an additional sensitive attribute, but we focus on the single group setting in this work.}. We note the \texttt{Adult Income} dataset shows an imbalance in class labels, as it includes many more negative examples than positive examples. While \cite{pmlr-v161-padh21a} do not adjust their learning strategies or their reporting to this setting, we note that a constant predictor would achieve an accuracy of 76\% by simply predicting the negative label for all examples. Thus, for the finetuning approach we consider, we employed a correction for imbalance during training (see Pytorch's BCE documentation). Finally, we would like to acknowledge our familiarity with recent work suggesting to stop using this dataset for fairness studies \cite{Ding_DBLP:journals/corr/abs-2108-04884}, but we would like to emphasize we include it here to allow for direct comparison with \cite{pmlr-v161-padh21a}. 

\subsection{Data Preprocessing}
We discuss the generation and treatment of \texttt{Color-MNIST} in the main text. 


For the \texttt{COMPAS} and \texttt{NELS} datasets, we mainly followed the pre-processing steps taken by the original authors and analyzers \citep{larson2016we, kleinberg2018algorithmic}. 


Finally, we adopt the treatment of the \texttt{Adult Income} dataset directly from \cite{pmlr-v161-padh21a}. 


\subsection{Data Splits and Cross Validation}
For all datsets, two-thirds of the dataset were used as a training set. As for validation and test splits, those were created and selected via kfold cross validation on the remaining third of each dataset. Notice that there is a slight difference in pipelines between the multi-objective experiments, where we tried to stay as close as possible to the procedures in \cite{pmlr-v161-padh21a} for comparison purposes, while for the finetuning objective experiments we developed our own pipeline. Thus, for details on the exact procedure for validation and averaging of results over runs for the multi-objective case, see \cite{pmlr-v161-padh21a}. 

For the finetuning experiments, we used 3-folds for cross-validation for \texttt{COMPAS} and \texttt{MNIST} (as the model was more complex and required a little longer training time), while for NELS we used 5-folds. Those were operationalized via the scikit-learn \citep{pedregosa2011scikit} library, using StratifiedKFold in particular. Stratification was done over the sensitive attribute, to make sure the same proportion of the minority group is achieved in each fold. 


\begin{figure}[!ht]
     \centering
     \subfloat[\texttt{COMPAS}]{
     \begin{minipage}[c]{0.465\textwidth}
         \centering
         \includegraphics[width=\textwidth]{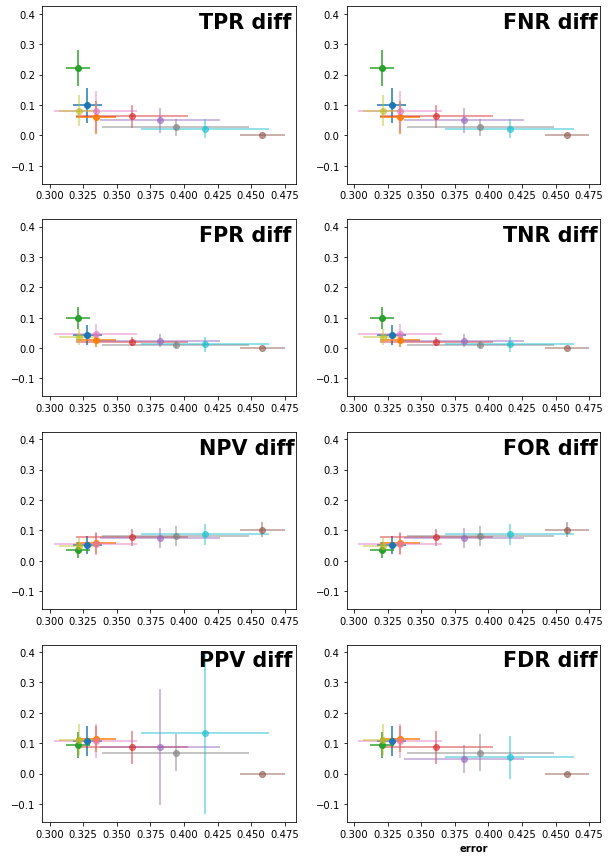} \label{fig:COMPAS_all8_w_alt_objs}
     \end{minipage}}
     \subfloat[\texttt{Adult}]{
     \begin{minipage}[c]{0.46\textwidth}
         \centering
         \includegraphics[width=\textwidth]{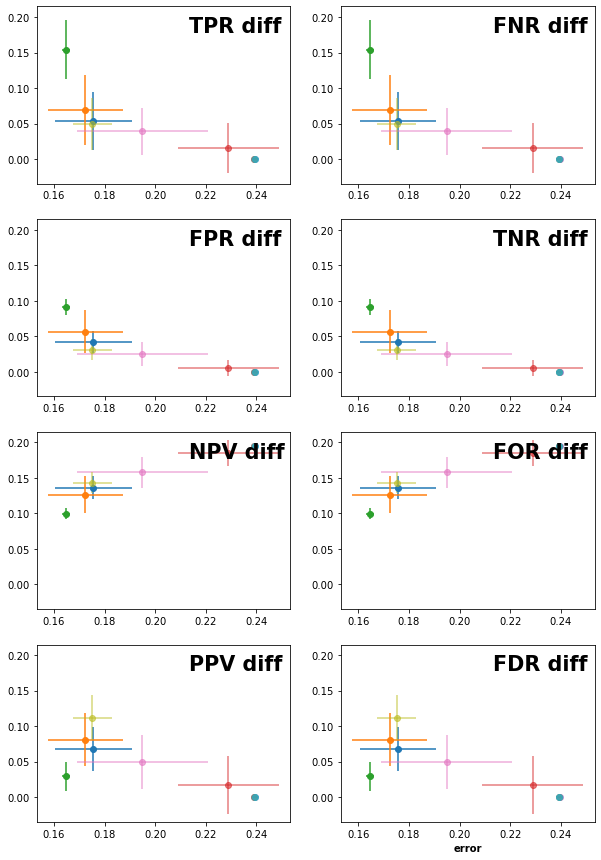} \label{fig:Adult_all8_w_alt_objs}
     \end{minipage}}
    \subfloat{
     \begin{minipage}[c]{0.1\textwidth}
         \centering
         \includegraphics[width=\textwidth]{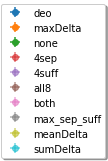} 
     \end{minipage}}
    \caption{An example of performance of alternative objectives on the COMPAS dataset, averaged over 25 on the test set. The main point of this figure is to demonstrate we did consider alternative objectives, as the multi-objective framework indeed allowed, but chose to focus on the most promising forms we ended up including in the main text.}
    \label{fig:all8_alt_objs}
\end{figure}

\section{Additional experimental results}

\subsection{Multi-Objective results}

On top of the results reported in the main text, we have also considered other forms of objectives to represent different forms of constraints. Among these were the sum of all $\Delta$-violations, the mean of all $\Delta$-violations, all 4 $\gapsep{}$ (4sep), all 4 $\gapsuff{}$ (4suff) or both (all8), both $\gapsepmax$ and $\gapsuffmax$ (called max in the plot below), as well as both DEO and DP (exactly like in \cite{pmlr-v161-padh21a}, and called both in the following figure's legend).

In Figure~\ref{fig:all8_alt_objs} we provide an example of such objectives applied to the \texttt{COMPAS} and \texttt{Adult} datasets. The pattern of performance was somewhat similar across our trials, and we decided to focus on the objective presented in the main text instead.

\subsection{Finetuning results}

\begin{figure*}[t!]
     \centering
     \subfloat[\texttt{Color-MNIST}]{
     \begin{minipage}[c]{.24\textwidth}
         \centering
         \includegraphics[width=\textwidth]{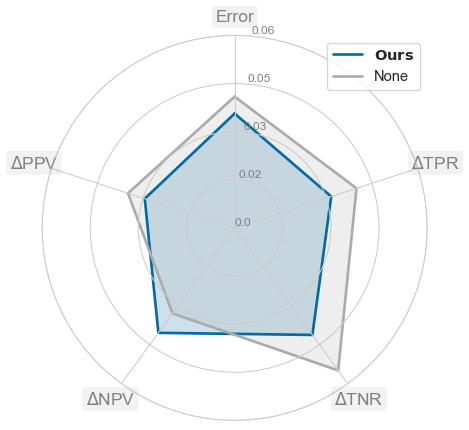}
     \end{minipage}
    }
     \subfloat[\texttt{COMPAS}]{
     \begin{minipage}[c]{.24\textwidth}
         \centering
         \includegraphics[width=\textwidth]{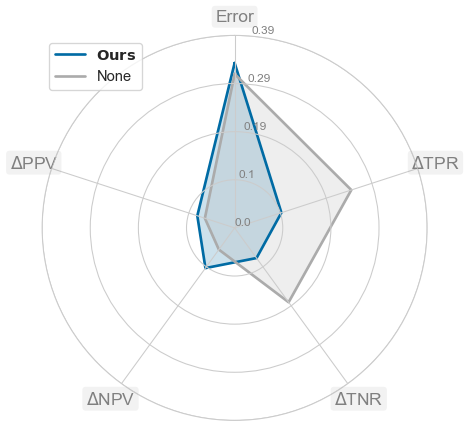}
     \end{minipage}}
    \subfloat[\texttt{NELS}]{
     \begin{minipage}[c]{.24\textwidth}
         \centering
         \includegraphics[width=\textwidth]{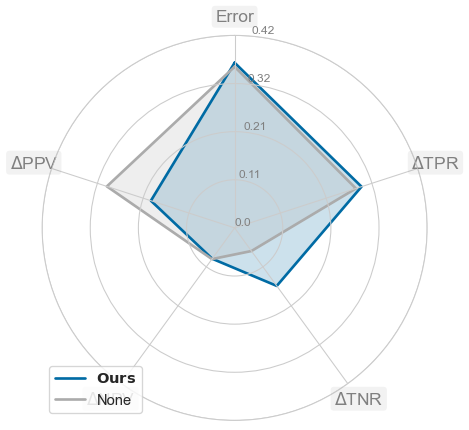}
     \end{minipage}
    }

      \caption{Finetune experiment with $\allgapmax$ objective (instead of $\allgapmaxbinarydiff$ used in practice in the main text). Error and fairness violations in terms of $\PPV$, $\NPV$, $\FPR$ and $\TPR$ absolute difference across groups for BCE vs. $\allgapmax$ (indicated as \textbf{Ours} in the legend above). 
      } \label{fig:fntn_maxDelta_full}
\end{figure*}

\subsubsection{Hyperparmeter Choice for Fine-tuning Experiments}
A range of hyperparameters were considered for obtaining the results in the manuscript. We will attach the code to reproduce the results alongside this supplemental, to make results easy to reproduce, and hopefully make all choices we made explicit. We discuss the gist of them here.

Values for each of these hyperparameters, for each one of \texttt{Color-MNIST}, \texttt{COMPAS} and \texttt{NELS}, were chosen via grid search. The ranges we considered for each one of the key hyperparameters are as follows:

\begin{itemize}
    \item $\gamma \in [0.1, 0.99]$ - controls the decrease in learning rate at each step taken by the scheduler. 
    \item \textbf{weight decay} $\in [1e^{-6}, 1e^{-1}]$ - strength of penalty on magnitude of model parameters, akin to $\ell_2$ regularization.
    \item \textbf{scheduler step size} $\in [20, 200]$ - determines the intervals of epochs at which the scheduler applies a reduction in learning rate.
    \item \textbf{learning rate} $\in [1e^{-8}, 1e^{-1}]$ - learning rate used by optimizer to update model parameters.
    \item \textbf{hidden dimensions} $\in [64, 512]$ - number of hidden dimensions of the Multi-Layered Perceptron model (\texttt{NELS} uses a single-layered logistic regression, so not applicable).
    \item \textbf{finetune learning rate} $\in [1e^{-8}, 1e^{-1}]$ - learning rate used once objective function changed as part of the finetuning regime.
    \item \textbf{finetune $\gamma$} $\in [0.1, 0.85]$ - $\gamma$ (as above) used once objective function changed as part of the finetuning regime. 
\end{itemize}


\begin{figure*}[ht!]
     \centering
     \subfloat[Training split]{
     \begin{minipage}[c]{.42\textwidth}
         \centering
         \includegraphics[width=.47\textwidth]{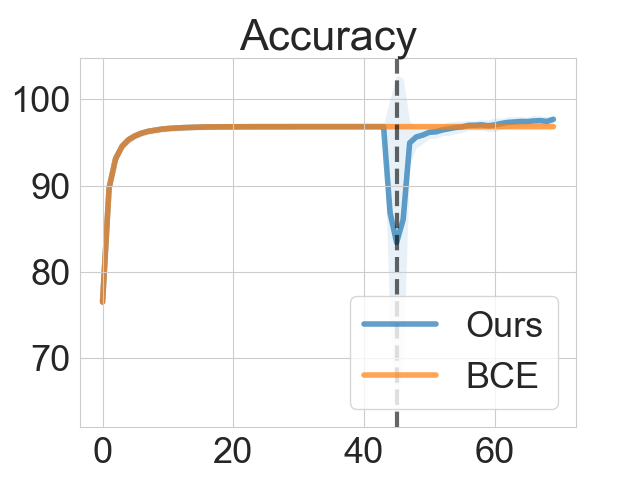}
         \includegraphics[width=.47\textwidth]{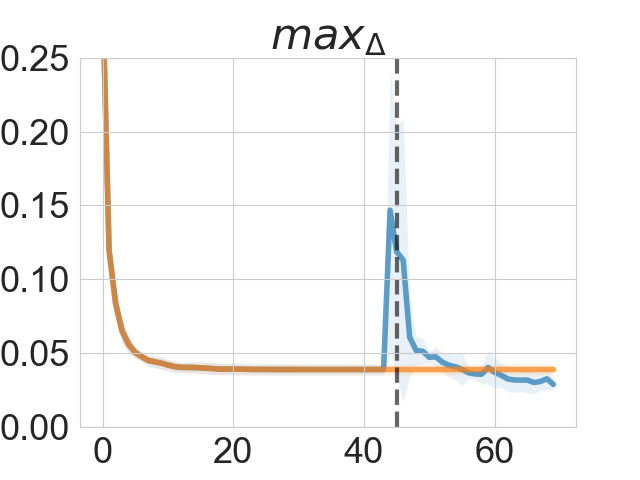}
         \label{fig:MNIST_fntn_binarydiff_train}
     \end{minipage}}
    \subfloat[Validation split]{
     \begin{minipage}[c]{.42\textwidth}
         \centering
         \includegraphics[width=.47\textwidth]{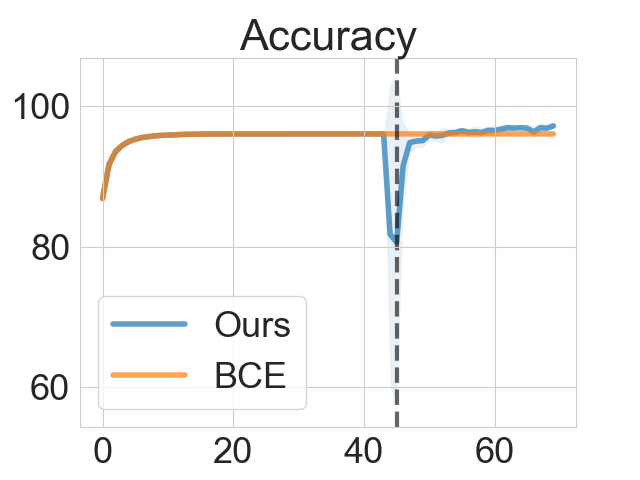}
         \includegraphics[width=.47\textwidth]{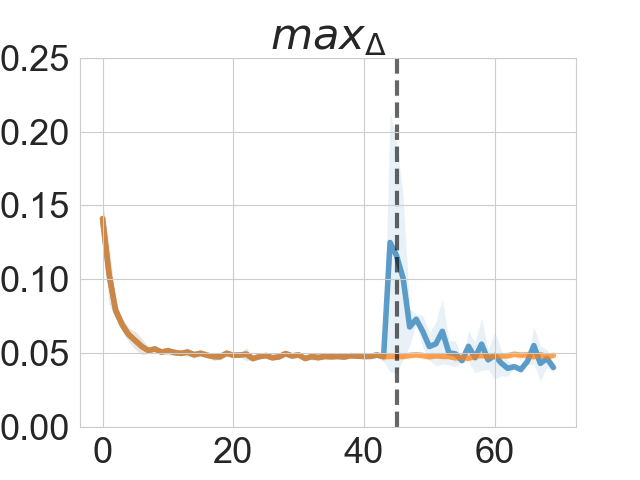}
         \label{fig:MNIST_fntn_binarydiff_validation}
     \end{minipage}}
        \caption{\texttt{Color-MNIST} \texttt{finetune} $\allgapmaxbinarydiff$ experiment. Curves indicate the mean and 95\%-confidence intervals for 3 different model initializations. Validation results are also averaged over 3 cross-validation (CV) folds. The dashed line indicates the loss switch for the finetuning (here BCE to $\allgapmaxbinarydiff$). 
        }
        \label{fig:MNIST_fntn_binarydiff_curves}
\end{figure*}

\begin{figure*}[ht!]
     \centering
     \subfloat[Training split]{
     \begin{minipage}[c]{.42\textwidth}
         \centering
         \includegraphics[width=.47\textwidth]{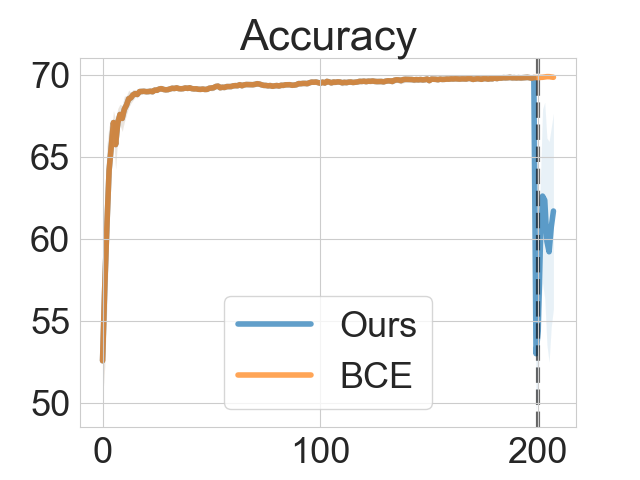}
         \includegraphics[width=.47\textwidth]{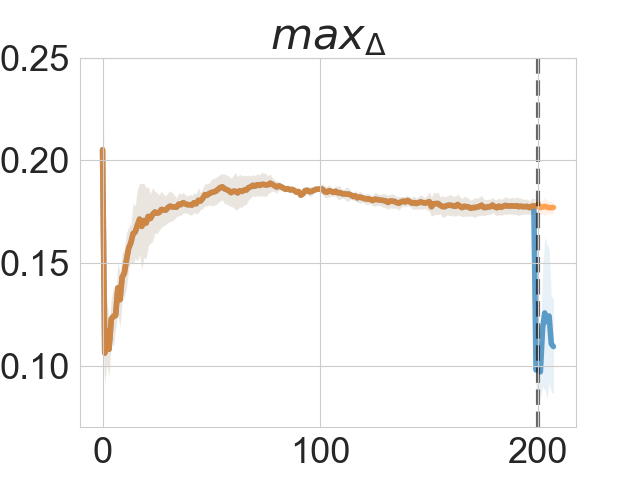}
         \label{fig:COMPAS_fntn_binarydiff_train}
     \end{minipage}}
    \subfloat[Validation split]{
     \begin{minipage}[c]{.42\textwidth}
         \centering
         \includegraphics[width=.47\textwidth]{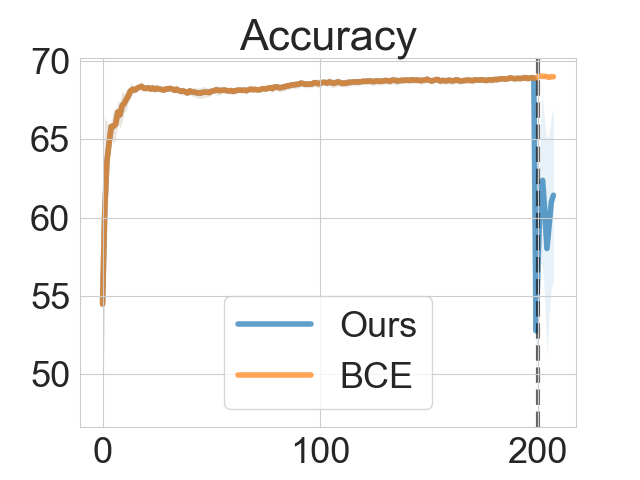}
         \includegraphics[width=.47\textwidth]{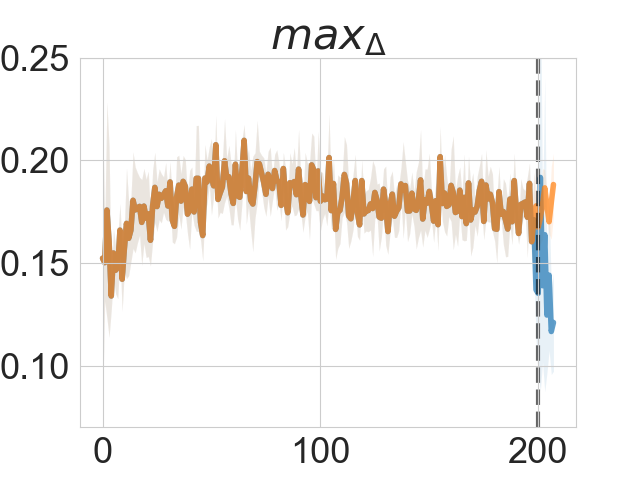}
         \label{fig:COMPAS_fntn_binarydiff_validation}
     \end{minipage}}
        \caption{\texttt{COMPAS} \texttt{finetune} $\allgapmaxbinarydiff$ experiment. Curves indicate the mean and 95\%-confidence intervals for 5 different model initializations. Validation results are also averaged over 3 cross-validation (CV) folds. The dashed line indicates the loss switch for the finetuning (here BCE to $\allgapmaxbinarydiff$). 
        }
        \label{fig:COMPAS_fntn_binarydiff_curves}
\end{figure*}

\begin{figure*}[ht!]
     \centering
     \subfloat[Training split]{
     \begin{minipage}[c]{.42\textwidth}
         \centering
         \includegraphics[width=.47\textwidth]{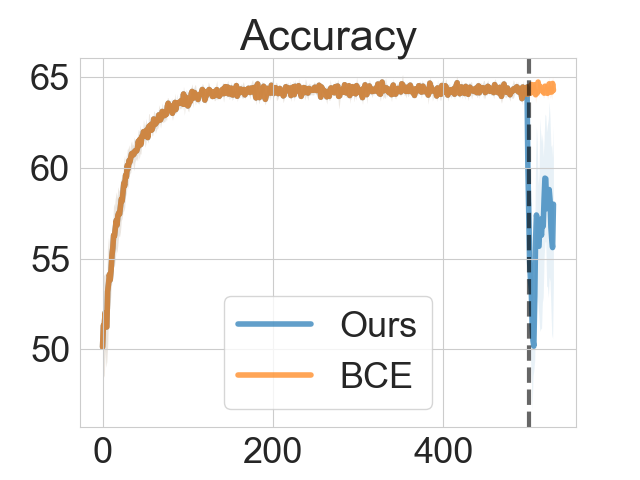}
         \includegraphics[width=.47\textwidth]{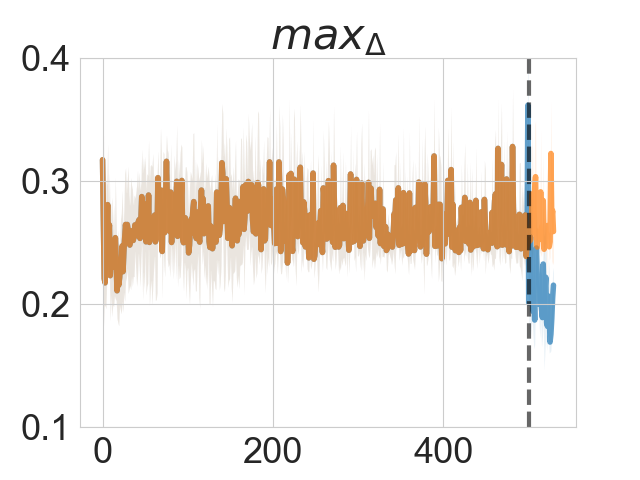}
         \label{fig:NELS_fntn_binarydiff_train}
     \end{minipage}}
    \subfloat[Validation split]{
     \begin{minipage}[c]{.42\textwidth}
         \centering
         \includegraphics[width=.47\textwidth]{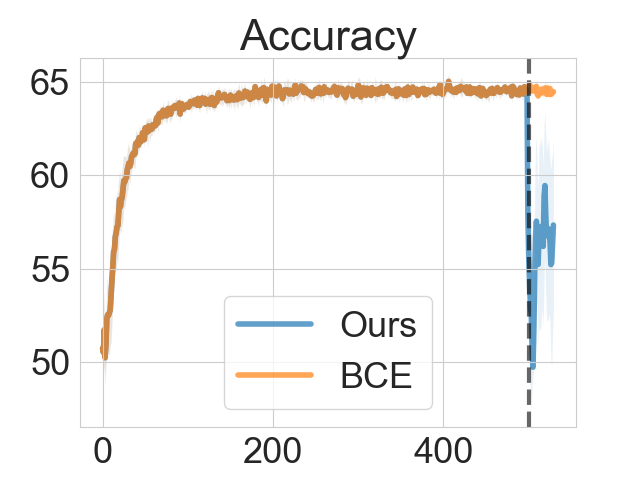}
         \includegraphics[width=.47\textwidth]{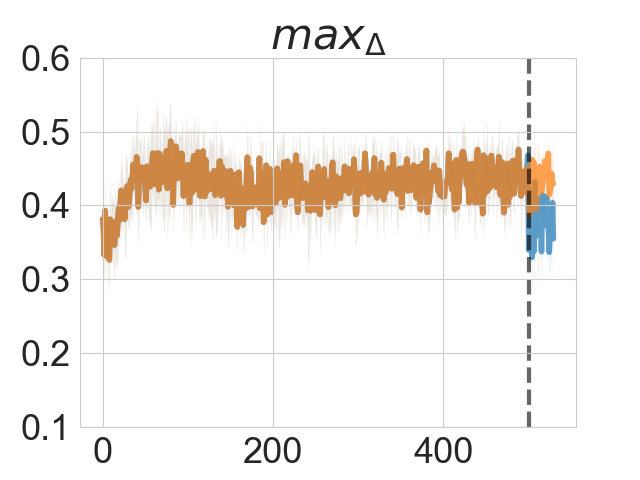}
         \label{fig:NELS_fntn_binarydiff_validation}
     \end{minipage}}
        \caption{\texttt{NELS} \texttt{finetune} $\allgapmaxbinarydiff$ experiment. Curves indicate the mean and 95\%-confidence intervals for 5 different model initializations. Validation results are also averaged over 5 cross-validation (CV) folds. The dashed line indicates the loss switch for the finetuning (here BCE to $\allgapmaxbinarydiff$). 
        }
        \label{fig:NELS_fntn_binarydiff_curves}
\end{figure*}

\subsubsection{Additional Experiments}
In the main text, we demonstrated the applicability of a finetuning optimization with our proposed objectives. We focused on the \texttt{COMPAS} dataset to provide a proof of concept, but elaborate here on the rest of the results that make up figure~\ref{fig:fntn_binarydiff_full} in the main text. 
In Figure~\ref{fig:fntn_maxDelta_full}, we additionally present results for using the finetuning approach with the $\allgapmax$ variant of our criteria, rather than $\allgapmaxbinarydiff$. We do so to argue that one could use either to achieve similar goals. Hyperparameter choices are overall comparable for both of these objective formulations, with an occasional difference in number of epochs run or a slight difference in finetune learning rate.


\paragraph{Color-MNIST.} Similarly to the behavior we have seen for \texttt{COMPAS} in the main text, in Figures~\ref{fig:fntn_binarydiff_full} (main text) and \ref{fig:fntn_maxDelta_full} we see we are able to fine better balancing point between fairness violations in terms of absolute group difference in PPV and NPV (\textit{sufficiency} violations) and TPR and FPR (\textit{separation} violations). Notice how we are able to find better accuracy predictors, that are also better in terms of 3 out of 4 violations. This is the case when using both $\allgapmax$ and $\allgapmaxbinarydiff$ as the finetuning objective. We also include training and validation curves in Figure~\ref{fig:MNIST_fntn_binarydiff_curves}.

\paragraph{COMPAS.} We provide here the training and validation curves for the \texttt{COMPAS} experiments elaborated on in the main text. See Figure~\ref{fig:COMPAS_fntn_binarydiff_curves}.

\paragraph{NELS.} The results obtained by the $\allgapmax$ and $\allgapmaxbinarydiff$ are even closer for the \texttt{NELS} dataset. With either approach, we find a predictor that is equivalent in terms of accuracy, but offers much smaller PPV violation. We do so by trading off a relatively smaller increase if FPR, and where TPR and NPV are equivalent, as can be seen in Figure~\ref{fig:fntn_binarydiff_full} in the main text. A similar dynamic can be seen in Figure~\ref{fig:fntn_maxDelta_full}. We also include training and validation curves in Figure~\ref{fig:NELS_fntn_binarydiff_curves}.

\end{document}